%% file: iclr2026_conference.tex
\documentclass{article} 
\usepackage{iclr2026_conference,times}

\usepackage{etoolbox}
\newtoggle{iclr}
\toggletrue{iclr}

\newcommand{\arxiv}[1]{\iftoggle{iclr}{}{#1}}

\input{header.tex}

\input{characters.tex}
\input{macros.tex}

\title{From Curiosity to Caution: Mitigating \\ Reward Hacking for Best-of-$N$ with Pessimism}

\author{
    \centerline{\textbf{
        Zhuohao Yu\textsuperscript{{1}}, \quad
        Zhiwei Steven Wu\textsuperscript{{1}},  \quad
        Adam Block\textsuperscript{{2}}
    }} \\
    \centerline{\textsuperscript{1}Carnegie Mellon University  \quad \quad
    \textsuperscript{2}Columbia University} \\
    \centerline{\texttt{zhuohaoy@andrew.cmu.edu, zstevenwu@cmu.edu, adam.block@columbia.edu}}
    \vspace{-.1cm}
}

\iclrfinalcopy
\begin{document}

\maketitle

\begin{abstract}
Inference-time compute scaling has emerged as a powerful paradigm for improving language model performance on a wide range of tasks, but the question of how best to use the additional compute remains open.  A popular approach is \emph{Best-of-$N$} (BoN) sampling, where $N$ candidate responses are generated, scored according to a reward model, and the highest-scoring response is selected.  While this approach can improve performance, it is vulnerable to \emph{reward hacking}, where performance degrades as $N$ increases due to the selection of responses that exploit imperfections in the reward model instead of genuinely improving generation quality. Prior attempts to mitigate reward hacking, via stronger reward models or heavy-handed distributional regularization, either fail to fully address over-optimization or are too conservative to exploit additional compute.  In this work, we explore the principle of \emph{pessimism} in RL, which uses lower confidence bounds on value estimates to avoid OOD actions with uncertain reward estimates. Our approach, termed as \emph{caution}, can be seen as the {reverse} of \emph{curiosity}: where curiosity (e.g., via Random Network Distillation, RND) rewards prediction error as a signal of novelty, caution penalizes prediction error as a signal of distributional uncertainty. Practically, caution trains an error model on typical responses and uses its prediction error to lower reward estimates for atypical ones. 
Our extensive empirical evaluation demonstrates that caution is a simple, computationally efficient approach that substantially mitigates reward hacking in BoN sampling.  We also provide a theoretical analysis in a simplified linear setting, which shows that caution provably improves over the standard BoN approach.  Together, our results not only establish caution as a practical solution to reward hacking, but also provide evidence that curiosity-based approaches can be a general OOD detection technique in LLM settings.
\end{abstract}

\input{sec1-intro}

\input{sec3-methodology}

\input{sec4-experiments}

\input{sec5-discussion}

\nocite{*}


\section*{Ethics Statement}\label{sec:ethics}
We affirm adherence to the ICLR Code of Ethics. This work only involve publicly available benchmark datasets under their respective licenses; we do not collect new human-subject data and process no personally identifiable information.

\section*{Reproducibility Statement}\label{sec:reproducibility}
To ensure reproducibility and transparency of the results within this paper, we document relevant hyperparameters and implementation details in the appendix. We will open-source our codebase along with detailed instructions and scripts.

\bibliography{iclr2026_conference}
\bibliographystyle{iclr2026_conference}
\appendix

\section{The Use of Large Language Models (LLMs)}\label{app:llm_usage}
LLMs were used assist refining the writing of this paper, including grammar correction, wording refinement, and formatting adjustments. We also use LLM agents to help with finding relevant work and implementing parts of our code. The use of AI tools does not affect the originality of the work or the authors' responsibility for the content.

\input{sec2-related}
\input{app_proofs}

\input{app_experiments}

\end{document}

%% file: header.tex
\arxiv{
\usepackage[numbers,sort&compress]{natbib}
}

\usepackage{boxedminipage}
\usepackage{multirow,nicefrac}
\usepackage{makecell,upgreek}
\usepackage{footnote}
\usepackage{longtable}
\usepackage{tablefootnote}
\usepackage[T1]{fontenc}
\usepackage{verbatim} 
\usepackage[utf8]{inputenc}
\usepackage{tikz}
\usepackage{booktabs}   
\usepackage{adjustbox}

\arxiv{
\usepackage[margin=0.75in]{geometry}
}
\usepackage[ruled,vlined]{algorithm2e}

\usepackage[dvipsnames]{xcolor}

\usepackage{subfigure}
\usepackage{wrapfig}

\usepackage{colortbl}
\definecolor{lightgray}{gray}{0.9}  

\usepackage{amsmath}
\usepackage{amsthm}

\usepackage[breaklinks=true]{hyperref}
\hypersetup{
	colorlinks=true,
	linkcolor=blue,
	citecolor=blue,
	urlcolor=blue
}

\usepackage[capitalise,nameinlink]{cleveref}
\usepackage{amsfonts,amssymb,mathrsfs}
\usepackage{mathtools}

\usepackage{appendix}

\theoremstyle{plain}
	\newtheorem{theorem}{Theorem}

	\newtheorem{lemma}{Lemma}

	\newtheorem{proposition}{Proposition}

	\theoremstyle{definition}

	\newtheorem{definition}{Definition}

\Crefname{appendix}{Appendix}{Appendices}

\usepackage{autonum}

\usepackage[T1]{fontenc}
\usepackage[scaled]{beramono}
\usepackage{listings}

\definecolor{darkred}{RGB}{139,0.0,0.0}
\definecolor{varorange}{RGB}{230,126,34}     
\definecolor{opblue}{RGB}{52,152,219}        
\definecolor{ifred}{RGB}{192,57,43}          
\definecolor{commentgray}{RGB}{150,150,150}  
\definecolor{codebg}{rgb}{0.98,0.98,0.98}    

\newsavebox\codebox

\arxiv{
\usepackage{authblk}
}

%% file: characters.tex
\renewcommand{\epsilon}{\varepsilon}

\def\ddefloop#1{\ifx\ddefloop#1\else\ddef{#1}\expandafter\ddefloop\fi}
\def\ddef#1{\expandafter\def\csname bb#1\endcsname{\ensuremath{\mathbb{#1}}}}
\ddefloop ABCDEFGHIJKLMNOPQRSTUVWXYZ\ddefloop

\def\ddefloop#1{\ifx\ddefloop#1\else\ddef{#1}\expandafter\ddefloop\fi}
\def\ddef#1{\expandafter\def\csname frak#1\endcsname{\ensuremath{\mathfrak{#1}}}}
\ddefloop ABCDEFGHIJKLMNOPQRSTUVWXYZ\ddefloop

\def\ddefloop#1{\ifx\ddefloop#1\else\ddef{#1}\expandafter\ddefloop\fi}
\def\ddef#1{\expandafter\def\csname fr#1\endcsname{\ensuremath{\mathfrak{#1}}}}
\ddefloop ABCDEFGHIJKLMNOPQRSTUVWXYZ\ddefloop

\def\ddefloop#1{\ifx\ddefloop#1\else\ddef{#1}\expandafter\ddefloop\fi}
\def\ddef#1{\expandafter\def\csname eul#1\endcsname{\ensuremath{\EuScript{#1}}}}
\ddefloop ABCDEFGHIJKLMNOPQRSTUVWXYZ\ddefloop

\def\ddefloop#1{\ifx\ddefloop#1\else\ddef{#1}\expandafter\ddefloop\fi}
\def\ddef#1{\expandafter\def\csname scr#1\endcsname{\ensuremath{\mathscr{#1}}}}
\ddefloop ABCDEFGHIJKLMNOPQRSTUVWXYZ\ddefloop

\def\ddefloop#1{\ifx\ddefloop#1\else\ddef{#1}\expandafter\ddefloop\fi}
\def\ddef#1{\expandafter\def\csname b#1\endcsname{\ensuremath{\mathbf{#1}}}}
\ddefloop ABCDEGHIJKLMNOPQRSTUVWXYZ\ddefloop

\def\ddefloop#1{\ifx\ddefloop#1\else\ddef{#1}\expandafter\ddefloop\fi}
\def\ddef#1{\expandafter\def\csname bhat#1\endcsname{\ensuremath{\hat{\mathbf{#1}}}}}
\ddefloop ABCDEFGHIJKLMNOPQRSTUVWXYZ\ddefloop

\def\ddefloop#1{\ifx\ddefloop#1\else\ddef{#1}\expandafter\ddefloop\fi}
\def\ddef#1{\expandafter\def\csname btil#1\endcsname{\ensuremath{\tilde{\mathbf{#1}}}}}
\ddefloop ABCDEFGHIJKLMNOPQRSTUVWXYZ\ddefloop

\def\ddefloop#1{\ifx\ddefloop#1\else\ddef{#1}\expandafter\ddefloop\fi}
\def\ddef#1{\expandafter\def\csname bst#1\endcsname{\ensuremath{\mathbf{#1}^\star}}}
\ddefloop ABCDEFGHIJKLMNOPQRSTUVWXYZ\ddefloop

\def\ddefloop#1{\ifx\ddefloop#1\else\ddef{#1}\expandafter\ddefloop\fi}
\def\ddef#1{\expandafter\def\csname bst#1\endcsname{\ensuremath{\mathbf{#1}^\star}}}
\ddefloop abcdeghijklmnopqrstuvwxyz\ddefloop

\def\ddefloop#1{\ifx\ddefloop#1\else\ddef{#1}\expandafter\ddefloop\fi}
\def\ddef#1{\expandafter\def\csname bhat#1\endcsname{\ensuremath{\hat{\mathbf{#1}}}}}
\ddefloop abcdefghijklmnopqrstuvwxyz\ddefloop

\def\ddefloop#1{\ifx\ddefloop#1\else\ddef{#1}\expandafter\ddefloop\fi}
\def\ddef#1{\expandafter\def\csname b#1\endcsname{\ensuremath{\mathbf{#1}}}}
\ddefloop abcdeghijklnopqrstuvwxyz\ddefloop

\def\ddefloop#1{\ifx\ddefloop#1\else\ddef{#1}\expandafter\ddefloop\fi}
\def\ddef#1{\expandafter\def\csname barb#1\endcsname{\ensuremath{\bar{\mathbf{#1}}}}}
\ddefloop abcdefghijklmnopqrstuvwxyz\ddefloop

\def\ddef#1{\expandafter\def\csname c#1\endcsname{\ensuremath{\mathcal{#1}}}}
\ddefloop ABCDEFGHIJKLMNOPQRSTUVWXYZ\ddefloop
\def\ddef#1{\expandafter\def\csname h#1\endcsname{\ensuremath{\widehat{#1}}}}
\ddefloop ABCDEFGHIJKLMNOPQRSTUVWXYZ\ddefloop
\def\ddef#1{\expandafter\def\csname hc#1\endcsname{\ensuremath{\widehat{\mathcal{#1}}}}}
\ddefloop ABCDEFGHIJKLMNOPQRSTUVWXYZ\ddefloop
\def\ddef#1{\expandafter\def\csname t#1\endcsname{\ensuremath{\widetilde{#1}}}}
\ddefloop ABCDEFGHIJKLMNOPQRSTUVWXYZ\ddefloop
\def\ddef#1{\expandafter\def\csname tc#1\endcsname{\ensuremath{\widetilde{\mathcal{#1}}}}}
\ddefloop ABCDEFGHIJKLMNOPQRSTUVWXYZ\ddefloop

%% file: macros.tex
\newcommand{\norm}[1]{\left\|#1\right\|}

\newcommand{\normf}[1]{\left\|#1\right\|_{\mathrm{F}}}
\newcommand{\abs}[1]{\left|#1\right|}
\newcommand{\inprod}[2]{\left\langle #1, #2 \right\rangle}

\newcommand{\rr}{\mathbb{R}}
\newcommand{\ee}{\mathbb{E}}

\DeclareMathOperator*{\argmax}{argmax}

\DeclareMathOperator{\Var}{Var}
\DeclareMathOperator{\Cov}{Cov}
\DeclareMathOperator{\trace}{Tr}

\newcommand{\thetastar}{\theta^{\star}}
\newcommand{\thetahat}{\hat{\theta}}
\newcommand{\piref}{\pi}
\newcommand{\rhat}{\hat{r}}
\newcommand{\rstar}{r^{\star}}
\newcommand{\wstar}{w^{\star}}
\newcommand{\what}{\hat{w}}

\newcommand{\projv}{\mathrm{proj}_V}
\newcommand{\projperp}{\mathrm{proj}_{V^{\perp}}}
\newcommand{\ihat}{\hat{i}}

\newcommand{\ipess}{i_{\mathrm{pess}}}
\newcommand{\istar}{i^{\star}}
\newcommand{\vperp}{V^{\perp}}
\newcommand{\relu}{\mathrm{ReLU}}
\newcommand{\Wstar}{W^{\star}}
\newcommand{\What}{\widehat{W}}
\newcommand{\lambdamin}{\lambda_{\mathrm{min}}}
\newcommand{\lambdamax}{\lambda_{\mathrm{max}}}
\newcommand{\wnought}{w^0}
\newcommand{\Wnought}{W^0}
\newcommand{\ustar}{u^{\star}}
\newcommand{\unought}{u^0}

\newcommand{\pihat}{\widehat{\pi}}
\newcommand{\rlcb}{\rhat_{\mathrm{LCB}}}

%% file: sec1-intro.tex
\section{Introduction}
Inference-time scaling has emerged as a transformative paradigm for enhancing language model performance, enabling significant improvements across a wide range of reasoning tasks without increasing model size \citep{brown2024large,guo2025deepseek,jaech2024openai}.  This success motivates the question of how best to leverage additional inference-time compute to maximize performance.  A particularly popular and effective approach is \emph{Best-of-$N$} (BoN) sampling \citep{stiennon2020learning, nakano2021webgpt, wang2022self, li2022competition,huang2025self},  where multiple candidate responses are generated for a given prompt, scored according to a reward model $\hat{r}$, and the highest-scoring response is selected.  This approach capitalizes on the intuition  that generating more candidates should increase the probability of finding higher-quality solutions, allowing the model to effectively `explore' a larger portion of the response space than it could with only a single response.

While BoN is a simple and competitive baseline that is capable of astonishing gains in many settings \citep{brown2024large}, its success is fundamentally limited by the quality of the reward model $\hat{r}$ used to score and select responses. Indeed, a common phenomenon often occurs with BoN, where performance initially improves as $N$ increases, but then hits an inflection point after which larger $N$ lead to increasingly worse outcomes \citep{gao2023scaling, huang2025best, khalaf2025inference}; an example of this phenomenon can be seen in \Cref{fig:Fig-1}, where we plot the performance of BoN on the mathematical reasoning task GSM8K for different $N$ scored by several different reward models.  This counterintuitive phenomenon, whereby increasing $N$ leads to worse performance, occurs due to \emph{reward hacking}, the process by which BoN selects responses that exploit $\rhat$ as opposed to the ground truth reward $\rstar$ we actually care about; while $\rhat$ may be a reasonable approximation to $\rstar$ on `typical' generations, as $N$ increases, BoN selects more atypical responses lying outside the training distribution of $\rhat$ that achieve high $\rhat$ scores through spurious correlations rather than genuine quality improvements.  In other words, reward hacking is a manifestation of Goodhart's Law: whenever a metric (in this case $\rhat$) becomes an optimization target, it ceases to be a reliable measure of quality \citep{goodhart1984monetary, weng2024rewardhacking}; for example, reward models are known to favor certain formatting preferences and surface-level patterns learned during training that do not necessarily correspond to improved reasoning or correctness \citep{liu2024rm,yu2025rewardanything,bukharin2025adversarial}.

\begin{figure}[t]
    \centering
    \includegraphics[width=0.85\columnwidth]{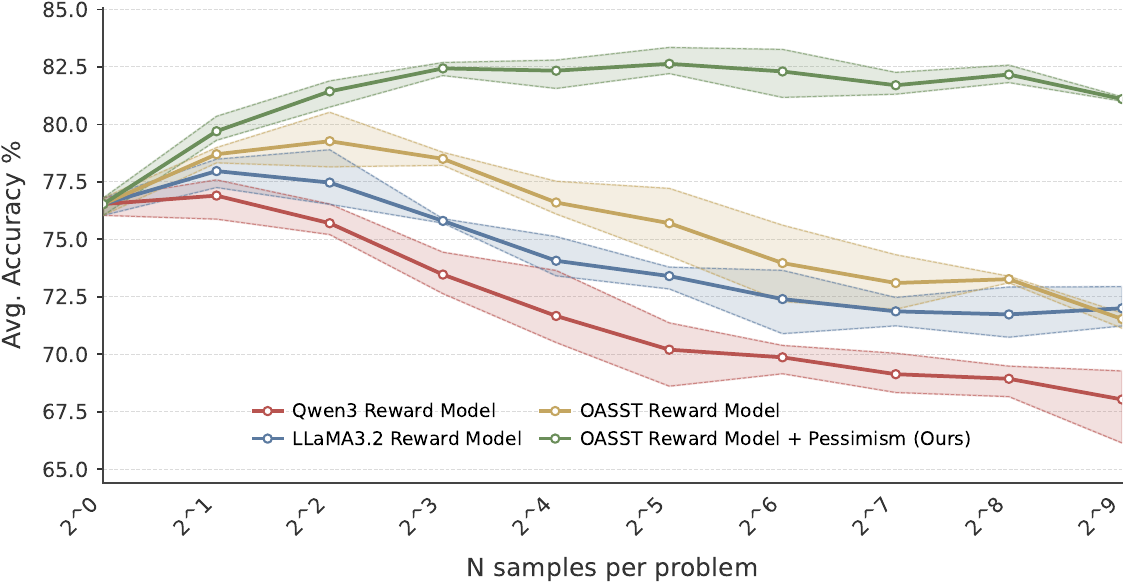}
    \caption{\textbf{Average Accuracy with different sampling budgets for Best-of-$N$} on the GSM8k dataset.  We see that standard Best-of-$N$ sampling (blue, red, and gold) suffers from reward hacking, exhibiting the characteristic rise-and-fall pattern as $N$ increases.  In contrast, caution (our approach, green) consistently improves with larger $N$, effectively mitigating reward hacking.}
    \label{fig:Fig-1}
\end{figure}

With the intuition that reward hacking is caused by out-of-distribution (OOD) generations from the perspective of the reward model, several works have proposed mitigation attempts by either improving the reward model \citep{liu2025skywork, yu2025rewardanything} or by regularizing the selection process to favor in-distribution responses \citep{huang2025best}.  Unfortunately, the former approach is fundamentally limited by the asymmetric difficulty of the problem: while it is relatively straightforward to obtain representative examples of high-quality responses through careful curation and human annotation, exhaustively characterizing all possible reward hacking strategies is intractable.  Representative of the latter work is \citet{huang2025best}, who propose to regularize the selection process to ensure that the distribution of selected responses does not drift too far from the distribution of responses seen during reward model training in a strong information theoretic sense.  The authors provide strong theoretical guarantees on the efficacy, monotonicity, and optimality of their approach, and demonstrate that it can be efficiently implemented in practice through a simple rejection sampling scheme \citep{block2023sample}.  However, this approach is overly conservative in practice, preventing the selection of genuinely better responses that are slightly out-of-distribution, and thus fails to fully leverage the benefits of inference-time scaling.  The problem arises because the regularization is \emph{distributional} and thus simultaneously ensures all possible OOD responses are penalized equally, regardless of whether or not they are likely to arise from imperfections in $\rhat$.  The starting point for this paper is thus to ask: \emph{\textbf{Can we design a BoN sampling scheme that is both robust to reward hacking and still able to leverage the full benefits of inference-time scaling?}}

\textbf{Contributions.} We answer this question in the affirmative by introducing \emph{caution}, an inference-time instantiation of the \emph{pessimism} principle from Reinforcement Learning (RL) \citep{jin2021pessimism,guo2022model}. Pessimism relies on lower confidence bounds to avoid selecting OOD actions with uncertain rewards. While \citet{huang2025best} implements pessimism at the \emph{distribution} level by constraining the sampling distribution to remain close to the base policy, we instead apply it at the \emph{reward} level: we penalize OOD responses by subtracting per-response uncertainty estimates from the scores assigned by $\rhat$, and then select the response with the highest pessimistic score.

Conceptually, caution is the dual of \emph{curiosity}, a principle used to drive optimistic exploration in deep RL \citep{pathak2017curiosity,burda2018exploration}. As in curiosity, we measure uncertainty by fixing a simple learning target (e.g., a frozen feature embedding), training a student model to predict this target on in-distribution data, and using the prediction error as an uncertainty signal. Unlike curiosity, however, our setting is fully \emph{offline}, so no continual student training is required---making the method significantly simpler and more practical.  
Through extensive empirical evaluation, we show that caution is computationally efficient and effectively mitigates reward hacking in BoN sampling. We further provide a theoretical analysis in a simplified linear setting, proving that caution strictly improves over standard BoN. Taken together, these results demonstrate that caution is both a powerful practical solution to reward hacking and compelling evidence for the broader efficacy of curiosity-style uncertainty signals in OOD detection and pessimistic policy learning for language models. \looseness=-1

%% file: sec3-methodology.tex
\section{Methodology}
\begin{figure}[t]
    \centering
    \includegraphics[width=0.85\columnwidth]{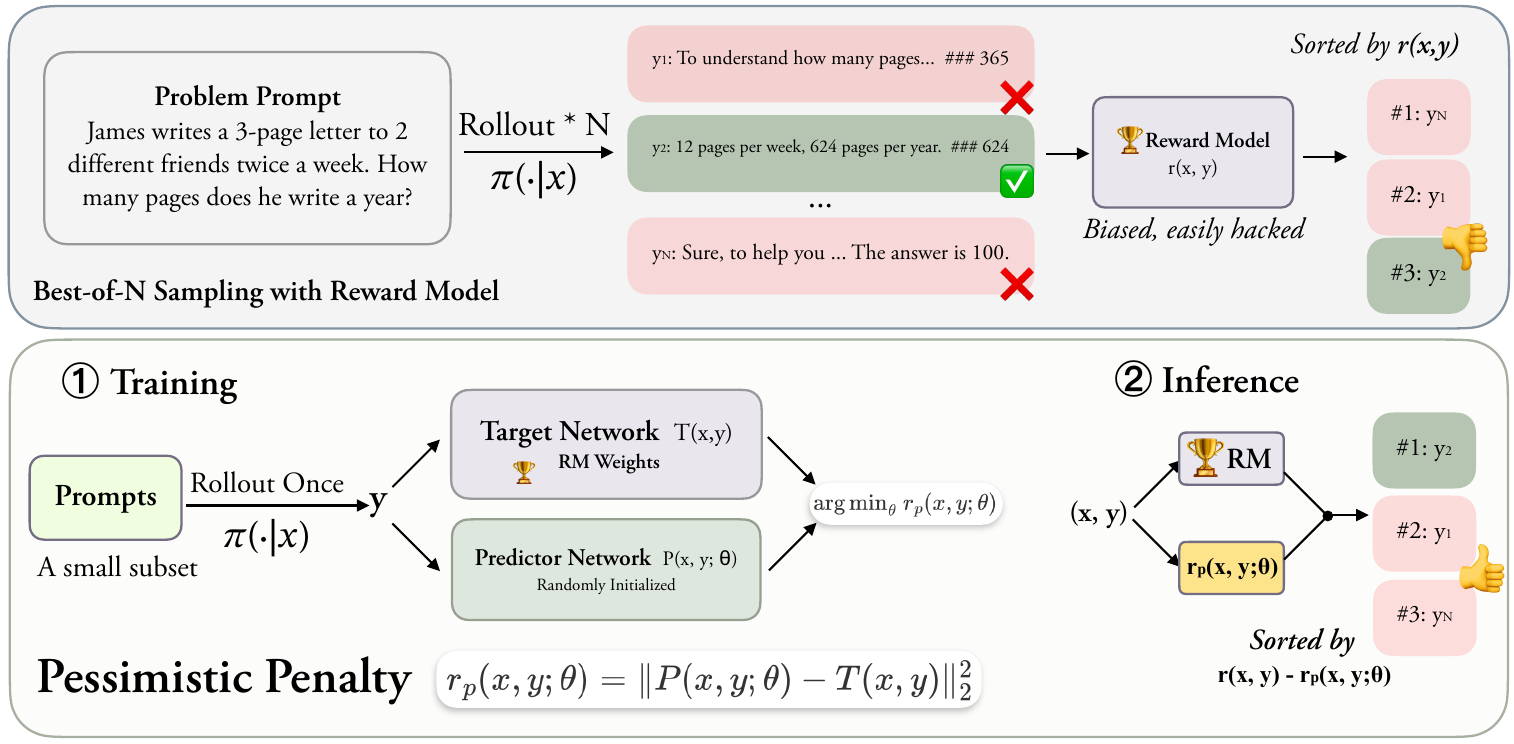}
    \vspace{-0.25cm}
    \caption{\textbf{Overview.} Predictor is trained to match RM features on typical responses; at inference, we select the candidate with the highest pessimistic reward, down‑weighting OOD ones.}
    \vspace{-0.25cm}
    \label{fig:overview}
\end{figure}

\subsection{Problem Formulation}

Consider a trained language model (LM) $\pi$, where $\pi: \cX \to \Delta(\cY)$ is a map from prompts $x \in \cX$ to distributions over responses $y \in \cY$.  While the LM is typically an autoregressive model, generating one token at a time before feeding the generated tokens back in as context, we abstract this process away and instead consider prompts and responses as single entities, possibly consisting of the concatenation of many individual tokens.  Suppose we have access to a learned reward model $\rhat: \cX \times \cY \to \rr$ that assigns a score to a prompt-response pair $(x,y)$ intended to reflect the quality of the response $y$ to the prompt $x$.  While the learner has access to $\rhat$, the true goal is to output a response that approximately maximizes some ground-truth reward $\rstar: \cX \times \cY \to \rr$, which is unknown to the learner. We focus on \emph{reward-hacking} behavior that seeks to maximize $\rhat$ at the expense of $\rstar$. 

Given a prompt $x$, the learner samples $N$ candidate responses $y_1, \dots, y_N \sim \pi(\cdot | x)$ independently from the LM and then selects one of these responses $y_{\ihat}$ with $\ihat \in [N]$ to output.  Perhaps the simplest strategy is the \emph{Best-of-$N$} (BoN) strategy, which simply selects the response with the highest reward model score: $\ihat = \ihat_N = \argmax_{i \in [N]} \rhat(x, y_i)$. Unfortunately, this strategy is vulnerable to reward hacking when we only have $\rhat \approx \rstar$ on typical samples from $\pi$. As $N$ grows large, the distribution of $\ihat_N$ can differ substantially from that of typical samples from $\pi$, leading to poor performance with respect to $\rstar$ as shown in \citep{huang2025best}.  While that work proposes to address this issue by constraining the BoN sampling distribution $\pihat$ to be close to $\pi$, we instead propose to directly apply regularization to the reward estimates themselves.  Thus, we will design an uncertainty estimate $\alpha: \cX \times \cY \to \rr_{\geq 0}$ that quantifies how uncertain we are about the reward model's estimate $\rhat(x,y)$ for a given prompt-response pair $(x,y)$ and then define $\rlcb(x, y) = \rhat(x,y) - \lambda \alpha(x,y)$ for some $\lambda > 0$ to be a pessimistic variant of the reward model that penalizes uncertain responses \citep{jin2021pessimism}.  As long as $\alpha(x,y)$ is small for typical samples from $\pi(\cdot | x)$ and large for `atypical' samples, this approach will successfully penalize reward-hacking responses by ensuring that $\rlcb \leq \rstar$ and this inequality is approximately tight for actually good quality responses.  The final algorithm will thus be to return $\ihat = \argmax_{i \in [N]} \rlcb(x, y_i)$ and the question becomes \emph{what choice of uncertainty estimate $\alpha$ appropriately captures OOD responses in a reward-aware manner?}

\subsection{Curiosity to Caution: Instantiating Pessimism}

Our approach introduces the principle of \emph{caution}, a phenomenon dual to the well-known technique of \emph{curiosity} used in online RL to incentivize exploration \citep{pathak2017curiosity,burda2018exploration}.  While in online RL, OOD states are desirable (as they represent good exploration targets), in our offline setting, OOD responses are to be avoided, as we have no reliable way to estimate their true reward given that $\rhat \approx \rstar$ only on typical responses from $\pi$.  We draw inspiration from  curiosity \citep{pathak2017curiosity} and Random Network Distillation (RND) \citep{burda2018exploration} in order to ensure that selected responses remain close to the distribution $\pi$ on which $\rhat$ is reliable.

In curiosity and RND, the core idea is to continually train a predictor network to match the outputs of a fixed target that is easily evaluated by the learner; the supervised learning error then becomes a proxy for OOD detection, with the intuition being that the predictor will be accurate on states similar to those seen during training and inaccurate otherwise.  While empirically successful as an `intrinsic reward' for optimistic exploration, this method can be challenging to implement due to the continued training of the predictor during online RL, which can lead to nontrivial memory and time overheads in practical RL pipelines.  Our method uses the same core idea of using the supervised learning error as a proxy for OOD detection, but is \emph{trained  fully offline}, which greatly increases practicality and ease of implementation.

\paragraph{Implementing ``Caution''} We employ two neural networks operating on the internal representations of the reward model $R(x,y)$: (1) a fixed \emph{target network} $T(x,y) = h_L^R(x,y)$, defined as the hidden state extracted from layer $L$ of the frozen reward model, and (2) a trainable \emph{predictor network} $P_\theta(x,y)$ that learns to predict this hidden state. The first $L$ layers of the reward model serve as our target feature extractor, with their output $h_L^R(x,y)$ as the prediction target. The predictor network $P_\theta(x, y)$ is a separate trainable network with parameters $\theta$ that can use various architectural choices (shared embeddings, simplified encoders, projection layers) but is designed to be lightweight to ensure efficient training and inference.

\begin{figure}[t]
    \centering
    \includegraphics[width=\columnwidth]{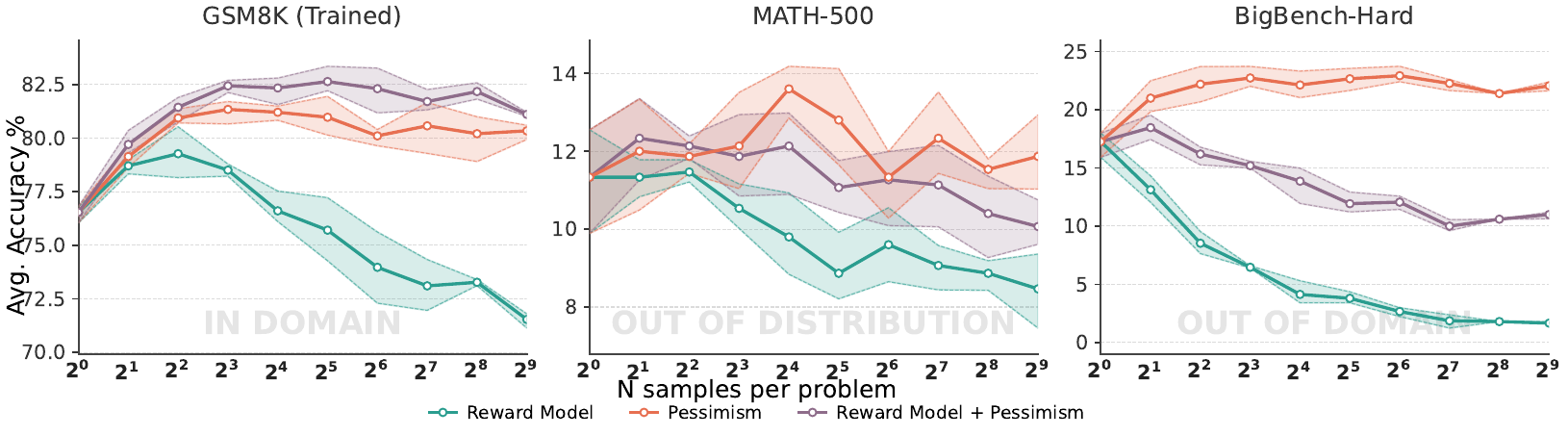}

\caption{\textbf{Scaling over $N$ across distributions and domains.} Best-of-$N$ sampling on GSM8K, MATH-500, and BigBench-Hard. Curves compare selection by Reward Model, Pessimism, and Reward Model + Pessimism. Note that the \emph{pessimism function is trained only on GSM8K}; thus, MATH-500 represents an out-of-distribution setting, while BigBench-Hard represents a fully out-of-domain setting.}\label{fig:tripanel-scaling}
\end{figure}

\paragraph{Training Process.}  We train our predictor $P_\theta$ on a dataset $\mathcal{D}_{\text{train}}$ of prompt-response pairs constructed directly from the benchmark train split using mean squared error loss: $\mathcal{L}(\theta) = \mathbb{E}_{(x,y) \sim \mathcal{D}_{\text{train}}} \left[ \|P_\theta(x, y) - T(x, y)\|^2 \right]$,
where the norm is Euclidean.  We construct responses $y \sim \pi(\cdot | x)$ by sampling from $\pi$ as an efficient empirical proxy for the training distribution of the reward model.  By training on this supervision-free distribution, the predictor achieves low error on in-distribution patterns while preserving high error on distributional outliers.  We emphasize that the data requirements for this step are restricted to \emph{prompts}, which are often much cheaper to collect than the high-quality \emph{labelled} data used to train reward models; a gold standard would be to reuse the same prompts used to train the reward model itself, but in practice we find that our method's OOD detection is relatively robust across different prompt distributions (cf. \Cref{ssec:main-empirical-results}).

\paragraph{Pessimistic Reward Estimation at Inference Time.}  Once we have trained our predictor $P_\theta$, we can use it to define our uncertainty estimate $\alpha(x,y) = \|P_\theta(x,y) - T(x,y)\|^2$, the prediction error of the predictor on the target network's features.    We then plug this into our pessimistic reward estimate $\rlcb(x,y) = \rhat(x,y) - \lambda \alpha(x,y) = \rhat(x, y) - \lambda \cdot \norm{P_\theta(x, y) - T(x, y)}^2$.  Note that this score is very easy to compute at inference time, requiring only two forward passes (one through $P_\theta$ and one through $T$) in addition to the forward pass through the reward model to compute $\rhat$; note that all of these passes can be fully parallelized and the cost of evaluating $\alpha(x,y)$ is on the same order as that of evaluating $\rhat$ due to the reuse of features from $\rhat$.  The parameter $\lambda$ controls the strength of the pessimism penalty, with $\lambda = 0$ recovering standard BoN sampling.  The final selection becomes $\ihat = \argmax_{i \in [N]} \rlcb(x, y_i)$, i.e., choosing the response with the highest pessimistic reward estimate. This procedure is summarized in \Cref{fig:overview}.

\subsection{Theoretical Interpretation}\label{ssec:theory}

In order to provide further motivation for our approach, we analyze in \Cref{app:proofs} a simple theoretical setting in which our approach provably improves upon BoN.  While we defer the details to the appendix, we summarize the main theorem here:
\begin{theorem}[Informal version of \Cref{thm:main}]\label{thm:informal}
    Let $y_1, \dots, y_N \in \rr^d$ be i.i.d. samples from a model $\pi$ and let $\rstar(y)$ be a linear reward function.  Let $\istar = \argmax_{i \in [N]} \rstar(y_i)$ be the optimal response and let $\ihat = \argmax_{i \in [N]} \rhat(y_i)$ be the response selected by BoN using a learned reward model $\rhat$.  If $\ipess = \argmax_{i \in [N]} \rlcb(y_i)$, where $\rlcb$ is our caution-regularized reward estimate, then under suitable conditions on the target network, $\rstar$, $\rhat$, $\pi$, and the predictor network, it holds that
    \begin{align}
        \ee\left[ \rstar(y_{\ipess}) - \rstar(y_{\ihat}) \right] \gtrsim \sqrt{\log(N)}, \qquad \text{and} \qquad \lim_{N \uparrow \infty } \frac{\ee\left[ \rstar(y_{\istar}) - \rstar(y_{\ipess}) \right]}{\ee\left[ \rstar(y_{\istar}) \right]} = 0.
    \end{align}
\end{theorem}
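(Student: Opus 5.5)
The plan is to make the "suitable conditions" concrete in a Gaussian linear model and then reduce both claims to standard extreme-value estimates for maxima of $N$ i.i.d. Gaussians.

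\textbf{Setting.} I will take $\pi = \mathcal{N}(0, I_d)$ and the true reward $\rstar(y) = \inprod{\wstar}{y}$ with $\norm{\wstar}=1$. The in-distribution data for both $\rhat$ and the predictor is assumed to lie (approximately) in a subspace $V$ containing $\wstar$. I will model the learned reward as $\rhat(y) = \inprod{\wstar}{y} + \inprod{u}{\projperp y}$, where $u \in \vperp$ is a spurious direction of norm $\beta$ that $\rhat$ never learned to suppress.

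For the caution component, I will take a linear target network $T(y) = W y$ with $W$ of full column rank on $\vperp$. The predictor is fit by least squares on samples from $V$, so it reproduces $T$ on $V$ and, at the population minimiser, outputs something that vanishes on $\vperp$. The uncertainty is then $\alpha(y) = \norm{W\projperp y}^2 \ge \lambdamin\norm{\projperp y}^2$, with $\lambdamin$ the smallest eigenvalue of $W^\top W$ restricted to $\vperp$. If the test distribution also places mass off $V$ (which is needed for hacking to occur), I will write $y = a + b$ with $a = \projv y$ and $b = \projperp y$ independent Gaussians.

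\textbf{Step 1: BoN is bad.} Since $\rhat(y_i) = \inprod{\wstar}{a_i} + \inprod{u}{b_i}$ is a sum of two independent Gaussians with variances $1$ and $\beta^2$, the BoN argmax aligns with the larger-variance component. Conditioning on the maximiser of the sum gives
\begin{align}
\ee[\rstar(y_{\ihat})] = \frac{1}{\sqrt{1+\beta^2}}\,\ee\Bigl[\max_i Z_i\Bigr] \approx \frac{\sqrt{2\log N}}{\sqrt{1+\beta^2}},
\end{align}
using the Gaussian regression identity $\ee[g_1 \mid g_1 + g_2] = (g_1+g_2)/(1+\beta^2)$.

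\textbf{Step 2: caution recovers the true ranking.} The pessimistic score is $\rlcb(y) = \inprod{\wstar}{a} + \inprod{u}{b} - \lambda\norm{Wb}^2$. Completing the square gives $\inprod{u}{b} - \lambda\lambdamin\norm{b}^2 \le \norm{u}^2/(4\lambda\lambdamin)$. Hence $\rlcb$ and $\rstar$ differ by at most a bounded perturbation $\epsilon_i$ whose upper tail is uniformly controlled: it is at most $\beta^2/(4\lambda\lambdamin)$ from above and is a negative quadratic from below. From this, $\rstar(y_{\ipess}) \ge \rlcb(y_{\ipess}) - \beta^2/(4\lambda\lambdamin) \ge \rlcb(y_{\istar}) - c$. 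Bounding $\ee[\lambda\norm{Wb_{\istar}}^2]$ is easy because $b_{\istar}$ is independent of the selection, which gives $\ee[\rstar(y_{\istar}) - \rstar(y_{\ipess})] = O(1)$ with the constant independent of $N$. Dividing by $\ee[\rstar(y_{\istar})] \sim \sqrt{2\log N}$ yields the second limit.

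\textbf{Step 3: the gap.} Combining the two steps,
\begin{align}
\ee[\rstar(y_{\ipess}) - \rstar(y_{\ihat})] \ge \Bigl(1 - (1+\beta^2)^{-1/2}\Bigr)\sqrt{2\log N} - O(1) - o(\sqrt{\log N}),
\end{align}
which is $\gtrsim \sqrt{\log N}$ for any fixed $\beta > 0$.

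\textbf{Main obstacle.} I expect the hardest part to be the nonasymptotic extreme-value control. I need $\ee\max_i Z_i = \sqrt{2\log N} - o(\sqrt{\log N})$ uniformly, and I must handle the fact that $\ipess$ is chosen jointly using $b$. For the latter, I will try the deterministic inequality $\rstar(y_{\ipess}) \ge \rstar(y_{\istar}) - \lambda\norm{Wb_{\istar}}^2 - \beta^2/(4\lambda\lambdamin)$, which only requires expectations at the independent index $\istar$ and so sidesteps the dependence.

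A secondary issue is that the predictor is learned rather than exact. I will allow a residual $\norm{P_\theta - T}$ on $V$ of size $\epsilon$ and absorb it as an additional $O(\lambda\epsilon^2 d)$ term.
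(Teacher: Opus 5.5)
Your argument is correct and has the same skeleton as the paper. Step~1 is the Gaussian-regression computation of \Cref{prop:bon}. Step~2 is the decomposition in \Cref{prop:pessimistic}: a pointwise one-sided bound $\rlcb \le \rstar + \mathrm{const}$, optimality of $\ipess$, and independence of $\istar$ from $\{\projperp y_i\}$, so that $\rstar(y_{\istar}) - \rlcb(y_{\istar})$ can be averaged.

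The genuine difference is the shape of the penalty. The formal \Cref{thm:main} uses the unsquared error $\alpha = \norm{f_{\What} - f_{\Wstar}} \approx \norm{\projperp y}$. Its one-sided bound therefore comes from Cauchy--Schwarz and needs $\lambda \ge \norm{\projperp \thetahat}/(1-c)$. You keep the squared error that the method actually uses and get the one-sided bound by completing the square. This works for every $\lambda > 0$ and gives the $N$-independent loss $\lambda \normf{W\projperp}^2 + \beta^2/(4\lambda\lambdamin)$. Optimizing $\lambda$ gives $\beta\normf{W\projperp}/\sqrt{\lambdamin}$, the same $\norm{\projperp\thetahat}\sqrt{d-k}$ scale as the paper when $W$ is well conditioned.

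Your caution model is a special case of \Cref{prop:linear-rnd}: a deterministic target injective on $\vperp$ with a zero-initialized least-squares predictor, with your $W$ playing the role of $W_0 - \Wstar$. You assume its conditioning rather than deriving it from random-matrix bounds, and you do not treat the ReLU case of \Cref{thm:relu-rnd}. In exchange, taking $\Sigma = I$ makes explicit the independence of $\projv y$ and $\projperp y$ that the paper's proofs use implicitly.

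Two corrections. First, the deterministic inequality in your last paragraph is missing a term. It should read $\rstar(y_{\ipess}) \ge \rstar(y_{\istar}) + \inprod{u}{b_{\istar}} - \lambda\norm{W b_{\istar}}^2 - \beta^2/(4\lambda\lambdamin)$. The extra term has mean zero by the same independence, so your expectation bound is unaffected.

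Second, and more seriously, your closing claim that a misfit of size $\epsilon$ on $V$ can be absorbed as an $O(\lambda\epsilon^2 d)$ term is false. This is exactly where the squared penalty is more fragile than the paper's. A misfit $E$ on $V$ adds $\norm{E\projv y}^2$ to $\alpha$, which is quadratic in the very coordinates $\rstar$ rewards, and it is evaluated at the atypical index $\ipess$. For example, take $d = 2$, $V = \mathrm{span}(e_1)$, $u = \beta e_2$, $W = I$ and predictor $(1+\epsilon)e_1e_1^\top$. The pessimistic score is then $(a - \lambda\epsilon^2 a^2) + (\beta b - \lambda b^2)$. As $N \to \infty$ the selected point converges to $a = 1/(2\lambda\epsilon^2)$, so $\ee[\rstar(y_{\ipess})]$ stays bounded and both conclusions fail.

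Exact fit on $V$, which the paper also assumes, is therefore essential to your route. With the unsquared penalty (under the paper's threshold on $\lambda$), the same misfit only costs a relative loss of order $\lambda\epsilon$.
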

While the assumptions and conditions of \Cref{thm:informal} are necessarily somewhat strong in order to facilitate the analysis, the theorem provides a proof-of-concept that our approach can provably outperform BoN in a stylized setting as well as, to the best of our knowledge, the first theoretical guarantee on the success of curiosity- and RND-style methods for OOD detection.

%% file: sec4-experiments.tex
\section{Experiments}

\input{tables/scaling_results}
We now empirically evaluate our proposed approach, with a focus on answering the following three questions: (1) How well does caution mitigate reward hacking as the number of candidates $N$ increases? (2) What design decisions (e.g. architecture and training hyperparameters) contribute most to the overall success of our method? (3) In which scenarios does our method outperform standard BoN sampling, and what factors drive its success?  We now briefly describe our empirical setup (with full details deferred to \Cref{app:exp-details}) before addressing each of these questions in turn. 

\paragraph{Experimental Setup.} We use \texttt{Llama-3.2-3B-Instruct} \citep{dubey2024llama} as our language model $\pi$ due to its strong reasoning capabilities and open-source availability.\footnote{Some other open-weight models are thought to be contaminated with benchmark datasets \citep{wu2025reasoning}, which could skew our results, further motivating our choice in model.}  We consider three prompt distributions coming from reasoning datasets of varying difficulty: \texttt{GSM8K} \citep{cobbe2021training}, \texttt{MATH-500} \citep{hendrycks2021measuring}, and \texttt{BigBench-Hard} \citep{suzgun2022challenging}.  Our ground truth reward $\rstar$ is binary, with reward given if and only if a response provides the correct answer to the given prompt.  While we consider several reward models $\rhat$, we focus primarily on \texttt{OASST} \citep{kopf2023openassistant}, which \Cref{fig:Fig-1} demonstrates provides better performance than comparably sized reward models like \texttt{Skywork-Reward-V2-Qwen3-0.6B} and \texttt{Skywork-Reward-V2-Llama-3.2-1B} \citep{liu2025skywork}, making it a strong baseline for demonstrating reward hacking phenomena.  For each prompt and $N$, we use vLLM \citep{kwon2023efficient} to generate $N$ independent responses from the base model $\pi$ (and boostrap this process 3 times to generate confidence intervals), before scoring each response with $\rhat$ and measuring the performance of the selected response according to $\rstar$.  To train our predictor network $P_\theta$, we use an independent dataset of responses generated by the base model $\pi$ itself on the training split of \texttt{GSM8K}, and evaluate the performance of the uncertainty estimates both in-distribution (prompts from the test set of \texttt{GSM8K}) and out-of-distribution (prompts from other reasoning datasets).  In order to maintain consistency, we normalize all rewards $\rhat$ to be centred with unit variance using an independent set of responses and we do the same for the uncertainty estimates from $P_\theta$.  \looseness=-1

\subsection{Caution Mitigates Reward Hacking}\label{ssec:main-empirical-results}

\input{tables/weight_ablation}

Our main results demonstrate that our proposed mechanism of caution \textbf{mitigates reward-hacking and leads to improved performance of BoN sampling as $N$ increases both for in- and out-of-distribution prompts.} In \Cref{fig:Fig-1}, we exhibit the performance of BoN sampling on \texttt{GSM8K} for several choices of reward models and observe that they all exhibit reward hacking, with performance degrading for large $N$.  We also see that using our pessimistic approach, this reward-hacking is substantially mitigated, with performance improving monotonically with $N$ for all considered $N$.  Moreover, we get a substantial performance boost of $4.2\%$ over peak accuracy for the reward model alone and an astonishing $15.5\%$ boost over the final accuracy of the reward model alone.  This trend compares favorably with that of \citet{huang2025best}, who observe monotonicity of performance in $N$, but struggle to outperform BoN sampling for optimally tuned $N$ for most choices of $\pi$ and $\rhat$.

The above results involve evaluating caution on the same distribution of prompts used to train the predictor network $P_\theta$, which is likely beneficial to its performance.  In reality, we expect to use caution in scenarios where the prompts are potentially OOD and hope that the uncertainty estimates remain valid.  To evaluate the extent to which this holds, we used the same uncertainty estimates to produce pessimistic BoN sampling on two significantly harder reasoning datasets: \texttt{MATH-500} and \texttt{BigBench-Hard}.  These datasets comprise prompts that are significantly different from those in \texttt{GSM8K}, with the latter even coming from a different, non-mathematical domain.

We report the results of these OOD prompt experiments in \Cref{fig:tripanel-scaling} and \Cref{tab:scaling_results}, comparing the performance of the in-distribution prompts (left) with the two other tasks (centre and right).  As in \Cref{fig:Fig-1}, we sweep over a logarithmic grid in $N$ from $1$ to $512$ and compare three approaches: BoN using only $\rhat$, BoN using only the uncertainty estimates from $P_\theta$, and BoN using caution.  As in \Cref{fig:Fig-1}, we see that caution successfully mitigates reward hacking while preserving the benefits of reward-guided selection.  The results on \texttt{MATH-500} reveal a different pattern: pessimism-only outperforms the combined approach, achieving 13.6\% peak accuracy with only 1.7 points degradation compared to 2.3 points for the combined method. This counterintuitive finding reflects the increased problem difficulty—while \texttt{MATH-500} shares the mathematical reasoning domain with \texttt{GSM8K}, the problems are significantly more complex and multi-step, making the task of $\rhat$ more difficult.   In this regime, $\rhat$ becomes less reliable at distinguishing genuine quality improvements, making its contribution less beneficial. However, caution still effectively prevents the severe degradation seen with BoN (3.0 points).  The most challenging task we consider is \texttt{BigBench-Hard}, where the reward model $\rhat$ fails catastrophically.  Indeed, we observe the counterintuitive phenomenon that using the uncertainty penalty alone is the most performant of all three methods, doing even better than combining uncertainty with $\rhat$.  This surprising result is due to the fact that when reward models encounter problems substantially harder than their training distribution, their learned spurious correlations dominate their judgments, leading to systematic selection of responses that superficially mimic quality patterns without genuine correctness.  Caution, while also operating outside its training distribution, avoids being misled by these hacking features and maintains stable performance by preferring responses that match familiar distributional patterns rather than chasing unreliable reward signals.\looseness=-1

\subsection{Ablation Studies}\label{ssec:ablations}
\input{tables/ablations}

In this section, we dissect the design and implementation of our proposed caution mechanism to understand which components contribute most to its success.  
Further ablation studies, including detailed inspections of uncertainty scores for individual responses and the effect they have on $\rhat$ and $\rstar$ scores can be found in \Cref{app:ablation}, while additional details on the ablations can be found in \Cref{app:exp-details}. \looseness=-1

We evaluate three critical design decisions: (1) the precise architectural choice of the predictor network; (2) the use of pre-trained embeddings in the target network; (3) the use of a projector layer between the predictor and target networks.  We also conduct two additional ablations to validate our core design choices: (a) the choice of mixing weights between $\rhat$ and the uncertainty estimates; (b) the choice of distilling reward model features versus random network features.

\paragraph{Architectural Ablations.}  We first examine the extent to which the complexity of $P_\theta$ and its relationship to the target network $T$ affects performance, in particular whether $P_\theta$ needs to share an architecture with the teacher $T$. The results are shown in \Cref{tab:rnd-ablation}: while greater flexibility in $P_\theta$ leads to a better reconstruction loss, this does not transfer to better performance when used to instantiate caution.  This occurs because improving the reconstruction loss can actually hurt the ability of $P_\theta$ to detect distributional novelty, as it becomes too good at reconstructing even out-of-distribution features.    We observe a similar trend when considering whether or not to tune the embeddings or share them between $P_\theta$ and $T$: while sharing embeddings leads to better reconstruction, it can hurt the final performance, which suggests that forcing predictors to learn representations from scratch creates more sensitive distributional boundaries than inheriting potentially overly simplified features from pre-trained models.  Finally, we see that adding a projection layer between the predictor and target networks provides consistent but modest benefits, indicating that information bottlenecks can help prevent pure memorization while preserving signal quality.  Together, these findings establish a key empirical finding for instantiating caution: \textbf{effective detection requires a careful balance between reconstruction accuracy and novelty sensitivity.}

\paragraph{Strength of regularization with Caution.}  To validate our core design choices, we systematically vary the mixing weights between the uncertainty score and that of $\rhat$ to identify the optimal balance for reward hacking mitigation and understand the robustness of our results to this choice.  Note that due to the normalization of $\rhat$ and uncertainty estimates described above, the strength $\lambda$ can be viewed as a direct measurement of the influence of the pessimism relative to $\rhat$.  We report our findings in \Cref{tab:weight_ablation} and \autoref{app:ablation}.  We observe that our approach is relatively robust to the choice of $\lambda$, with moderate to high weights (0.6-0.8) achieving optimal performance.  While the precise optimal weight will vary by task and choice of $\rhat$, with larger $\lambda$ being required in situations where $\rhat$ is less reliable, the results suggest that our approach does not require delicate tuning to be effective.

\paragraph{Comparing Caution to RND.}  Finally, we compare our approach of distilling reward model features (motivated by curiosity in \citet{pathak2017curiosity}) against the RND \citep{burda2018exploration}, which takes the teacher $T$ to be a randomly initialized network (possibly on top of pre-trained embeddings), testing whether our hypothesis about distributional familiarity requires semantic grounding in the reward model's representations.  We again consider a range of mixing weights $\lambda$ and report the results in \Cref{tab:weight_ablation}.  The results reveal a stark contrast between caution and RND, with the latter exhibiting dramatically inferior performance across all choices of $\lambda$.  This observation provides evidence for the hypothesis that effective distributional regularization requires semantic grounding: randomly initialized features cannot provide meaningful distributional boundaries, while reward model features capture task-relevant patterns that enable robust novelty detection.  The results establish that curiosity-driven pessimism succeeds not merely because of prediction error signals, but specifically because these signals are computed relative to the reward model's learned task representations.  

\subsection{Why It Works: A Case Study}

\begin{figure*}[t]
    \centering
    \includegraphics[width=\columnwidth]{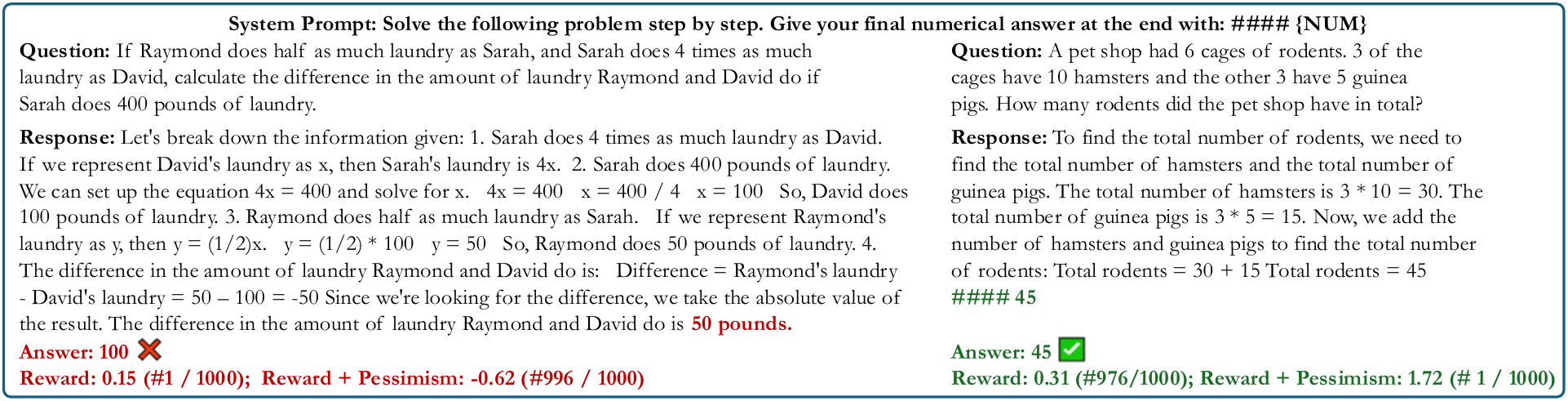}
    \caption{\textbf{Contrasting Selection Behaviors: Reward Hacking vs. Format Compliance.} Two representative examples showing how reward models favor verbose responses regardless of correctness, while our curiosity-driven pessimism prioritizes format compliance and distributional familiarity. RM assigns high scores to detailed responses regardless of correctness, while pessimism detects distributional deviation from training patterns and prefers correctly formatted solutions.}
    \label{fig:case_study_examples}
\end{figure*}

We now turn to a case study to illustrate the mechanism by which caution mitigates reward hacking and improves performance.  In \Cref{fig:case_study_examples}, we present two representative examples of questions-response pairs.  On the left, we see an instance that $\rhat$ scores highly (99.8th percentile of all scored responses) despite being incorrect; this response is verbose and contains multiple mathematical reasoning steps, but ultimately fails to provide the correct answer and does not follow the required formatting, instead demonstrating exactly the type of reward hacking our method targets: superficial mimicry of quality patterns (detailed explanations, step-by-step structure) without genuine correctness or adherence to specified requirements.  Note that the caution score for this response appropriately identifies it as OOD, likely due to its failure to follow the formatting requirements that are common in high-quality responses in the training data.  

By contrast, on the right of \Cref{fig:case_study_examples}, we see a concise, accurate response following the formatting requirements.  While $\rhat$ does not rank this response as well as the first, the pessimism recognizes it as familiar, matching the patterns of high-quality responses in the training data that consistently follow formatting specifications and provide direct, correct solutions.  This juxtaposition reveals that reward models can conflate verbosity with quality, particularly at larger $N$ where more elaborate responses become available. 
Our method effectively distinguishes between responses that \emph{appear} sophisticated (verbose explanations) and those that \emph{are} actually correct and compliant with task specifications. \looseness=-1

%% file: tables/scaling_results.tex
\begin{table*}[t]
\centering
\caption{\textbf{Scaling Performance Across Datasets with Reward Hacking Mitigation.} \textbf{Peak Acc}: highest accuracy achieved across all $N$ values. \textbf{Final Acc}: accuracy at $N=512$ where reward hacking is most severe. \textbf{Degradation}: performance drop from peak to final (lower is better). Results show mean with 95\% confidence intervals as subscripts across 3 bootstrap runs. Best results are in \textbf{bold} and second-best are \underline{underlined}.}
\label{tab:scaling_results}
\footnotesize
\setlength{\tabcolsep}{3pt}
\newlength{\datasepspace}
\setlength{\datasepspace}{12pt}
\renewcommand{\arraystretch}{1.0}
\begin{tabular}{@{}l*{9}{c}@{}}
\toprule
\multirow{2}{*}{\textbf{Method}} & \multicolumn{3}{c}{\textbf{GSM8K}~\citeyearpar{cobbe2021training}} & \multicolumn{3}{c}{\textbf{MATH-500}~\citeyearpar{lightman2023let}} & \multicolumn{3}{c}{\textbf{BBH-Hard}~\citeyearpar{suzgun2022challenging}} \\
\cmidrule(l{4pt}r{8pt}){2-4}\cmidrule(l{4pt}r{8pt}){5-7}\cmidrule(l{4pt}){8-10}
 & Peak & Final & Degr. & Peak & Final & Degr. & Peak & Final & Degr. \\ 
\midrule
Reward Model & 79.3$_{\pm1.2}$ & 71.5$_{\pm0.3}$ & 7.7 & 11.5$_{\pm0.3}$ & 8.5$_{\pm1.0}$ & 3.0 & \underline{17.3$_{\pm1.0}$} & 1.7$_{\pm0.1}$ & 15.6 \\
Pessimism Only & \underline{81.3$_{\pm0.5}$} & \underline{80.3$_{\pm0.3}$} & \textbf{1.0} & \textbf{13.6$_{\pm0.7}$} & \textbf{11.9$_{\pm1.0}$} & \textbf{1.7} & \textbf{22.9$_{\pm0.7}$} & \textbf{22.1$_{\pm0.4}$} & \textbf{0.9} \\
RM + Pessimism & \textbf{82.6$_{\pm0.6}$} & \textbf{81.1$_{\pm0.1}$} & \underline{1.5} & \underline{12.3$_{\pm1.0}$} & \underline{10.1$_{\pm0.6}$} & \underline{2.3} & \underline{18.5$_{\pm1.0}$} & \underline{11.0$_{\pm0.3}$} & \underline{7.5} \\
\bottomrule
\end{tabular}
\end{table*}

%% file: tables/weight_ablation.tex
\begin{table}[t]
\centering
\caption{\textbf{Weight Ablation and Design Comparison on GSM8K.} We compare different mixing weights between uncertainty score and $\rhat$ score, contrasting our approach (distilling reward model features) against traditional RND (distilling random network features). \textbf{Peak Acc}: best accuracy across all $N$. \textbf{Final Acc}: accuracy at $N=512$. Results demonstrate that distilling reward model representations substantially outperforms random network distillation across weight combinations.}
\label{tab:weight_ablation}
\small
\setlength{\tabcolsep}{6pt}
\renewcommand{\arraystretch}{1.1}
\begin{tabular}{@{}lcccc@{}}
\toprule
\multirow{2}{*}{\textbf{Pessimism strength} ($\lambda$)} & \multicolumn{2}{c}{\textbf{Caution (Ours)}} & \multicolumn{2}{c}{\textbf{Traditional RND}} \\
\cmidrule(l{8pt}r{8pt}){2-3}\cmidrule(l{8pt}){4-5}
 & Peak Acc & Final Acc & Peak Acc & Final Acc \\
\midrule
0.0 (RM Only) & 78.9 & 71.2 & 78.9 & 71.2 \\
0.2 & 79.5 & 72.5 & 78.7 & 71.3 \\
0.4 & 80.3 & 76.0 & 78.5 & 72.1 \\
0.6 & \underline{81.5} & \underline{80.2} & 78.4 & 73.1 \\
0.8 & \textbf{82.1} & \textbf{81.0} & 78.1 & 74.9 \\
1.0 (Caution Only) & 81.3 & 79.8 & 77.0 & 72.9 \\
\bottomrule
\end{tabular}
\end{table}

%% file: tables/ablations.tex
\begin{table}[t]
\centering
\small
\setlength{\tabcolsep}{8pt}
\caption{Ablation study of predictor architecture. \textbf{Training Loss} measures how well the predictor fits reward model representations. \textbf{Peak Acc} shows best performance across all $N$. \textbf{Final Acc} shows performance at largest $N$, where reward hacking is most severe.}
\begin{tabular}{lccc}
\toprule
\textbf{Predictor Configuration} & \textbf{Reconstruction Loss} & \textbf{Peak Acc (\%)} & \textbf{Final Acc (\%)} \\
\midrule
\multicolumn{4}{l}{\textit{Simplified Architecture Variants}} \\
\quad Lightweight + Trainable Emb. & 0.242 & 82.2 & 82.2 \\
\quad Lightweight + Frozen Emb. & 0.246 & 81.5 & 81.3 \\
\quad Lightweight + Separate Emb. & 0.255 & 82.7 & 81.8 \\
\quad Lightweight w/o Projection & 0.281 & 81.8 & 81.3 \\
\midrule
\multicolumn{4}{l}{\textit{Full Architecture Variants}} \\
\quad Full + Trainable Emb. & 0.127 & 80.3 & 80.2 \\
\quad Full + Frozen Emb. & 0.127 & 80.4 & 78.4 \\
\bottomrule
\end{tabular}
\label{tab:rnd-ablation}
\end{table}

%% file: sec5-discussion.tex
\section{Discussion}
In this work, we investigated instantiating pessimistic reward estimation with the principle of \emph{caution}, an approach adapted from curiosity-driven exploration in RL  that uses a supervised learning error as a measure of distributional uncertainty and penalizes the estimated reward of uncertain, out-of-distribution (OOD) inputs.  While our focus was on mitigating reward hacking in Best-of-$N$ sampling, a particular inference-time scaling technique, our results demonstrate that, when properly applied on top of pre-trained features, caution can effectively detect OOD samples and instantiate pessimistic policies in general. While we defer a more complete survey of related work to \Cref{app:related}, we now provide a brief summary thereof as well as discuss future directions.

\paragraph{Related Work.} Best-of-$N$ sampling was introduced in \citet{stiennon2020learning} and has been empirically investigated in many reasoning settings \citep{cobbe2021training,lightman2023let,li2022competition,brown2024large}.  While often effective, when the scoring rule is learned (as opposed to oracular), the approach has long been shown to be vulnerable to reward-hacking \citep{amodei2016concrete,skalse2022defining,gao2023scaling}.  While many attempted mitigations of reward-hacking have been explored for RL \emph{finetuning}, relatively few works have focused on the \emph{inference-time setting}.  Of particular note is \citet{huang2025best}, which proposes a distributional regularization approach that samples according to a $\chi^2$-regularized BoN procedure.  While theoretically well-motivated and empirically effective at ensuring monotonicity in $N$, the approach is overly conservative in practice.  Another approach is that of \citet{jinnai2024regularized}, who apply distributional regularization with respect to a Wasserstein distance on some embedding space; while potentially effective, the computation required grows quadratically in $N$, making it impractical for even moderate values of $N$.  In contradistinction to these works, our approach applies pessimism directly to the estimated rewards in a way that naturally leverages the beyond-worst-case errors present in estimated reward models in a way that is impossible for these distributional regularization approaches.  

Our key technique of \emph{caution} is built on top of the foundational curiosity \citep{pathak2017curiosity} and Random Network Distillation (RND) \citep{burda2018exploration} from classical deep RL.  While the such techniques have been very popular in aiding exploration, the extent to which they can be used for OOD detection and pessimistic learning has been a matter of some debate, with \citet{rezaeifar2022offline} claiming negative results and \citet{ciosek2019conservative,nikulin2023anti} demonstrating some positive results.  While these works are (i) in classical RL or supervised learning settings and (ii) do not use pre-trained features, our results morally align with the latter camp, demonstrating that such approaches can be effective for OOD detection and pessimistic learning in language models.

\paragraph{Future Directions.}  Our work provides strong evidence that caution, when correctly applied on top of pre-trained features, can be an effective detector of OOD text.  While we instantiate this approach for pessimistic reward estimation, it is natural to wonder if curiosity can be used as an explicit reward signal to encourage exploration of novel behaviors either purely during inference or during RL post-training of reasoning models.  While some preliminary work like \citet{gao2025navigate} has explored this idea, we believe that their mixed results stem from the fact that the curiosity module was trained from scratch rather than on top of pre-trained features, which we find to be so effective for OOD detection.  Another interesting direction is to explore the extent to which our proposed approach can help ensure continued AI alignment in the face of adversarial prompting or distributional shift due to inference-time scaling.  While the tasks we consider in this work are related to reasoning, we expect that our results would carry over to safety and alignment tasks \emph{mutatis mutandis}, which would represent a promising new approach to ensuring robust alignment of language models.

%% file: sec2-related.tex
\section{Additional Related Work}\label{app:related}
We now give a more detailed discussion of how the present paper relates to prior work.  Our approach is situated at the intersection of three key areas of research: Best-of-$N$ sampling with reward models, reward hacking and its mitigation attempts, and curiosity-driven exploration techniques.

\paragraph{Best-of-N Sampling and Reward Models.}  Best-of-N (BoN) sampling generates $N$ candidate responses and selects the highest-scoring according to a reward model: $y^* = \arg\max_{y \in \mathcal{Y}_N} r(x, y)$~\citep{stiennon2020learning}. This technique has proven effective for mathematical reasoning~\citep{cobbe2021training, lightman2023let} and competitive programming~\citep{li2022competition} and the version of BoN with oracular rewards (called `pass@k') is a standard evaluation metric for reasoning models \citep{dubey2024llama,ouyang2022training,abdin2024phi,hui2024qwen2,lambert2024rewardbench}.  Reward models are typically trained on tuples  of preference data $(x, y_{\text{chosen}}, y_{\text{rejected}})$ consisting of a prompt $x$ as well as preferred and dispreferred responses $y_{\text{chosen}}$ and $y_{\text{rejected}}$ using ranking losses.  While a powerful paradigm,  this training procedure introduces vulnerabilities: models must infer complex reasons for preferences, often leading them to rely on spurious correlations like response length, formatting patterns, or stylistic preferences~\citep{liu2024rm}. High-quality reward models are also computationally expensive to train, requiring resources similar to base language models. This cost makes \emph{ensemble approaches}, a common strategy for reducing vulnerabilities in ML systems, impractical, as organizations typically train only a single reward model per domain~\citep{gao2023scaling, jinnai2024regularized}.  Our work focuses on mitigating reward hacking in BoN sampling with a single reward model, without retraining or modifying the reward model itself.

\paragraph{Reward Hacking and Mitigation Attempts.}  Reward hacking occurs when optimizing against imperfect proxy rewards leads to high-scoring but low-quality outcomes—a manifestation of Goodhart's Law~\citep{amodei2016concrete, skalse2022defining}. \citet{gao2023scaling} observed a common trend in BoN sampling, where as $N$ increases, performance first rises then falls.  As demonstrated theoretically in \citet{huang2025best}, these two phases correspond to an initial phase (when $N$ is small) and the learned reward $\rhat$ is an effective proxy for the true reward $\rstar$, leading to BoN succeeding, and a second phase where $N$ grows so large so as to produce atypical responses on which $\rhat$ is no longer effective due to spurious correlations and poor coverage during training \citep{eisenstein2023helping, liu2024rm, yu2025rewardanything}.  Broadly, there have been two main approaches to mitigating this reward-hacking problem: \emph{training-time} and \emph{inference-time} approaches.

\emph{Training-time ensemble methods} attempt to mitigate this by combining multiple reward models~\citep{coste2023reward, zhai2023uncertainty, rame2024warm, yan2024reward}. However, \citet{eisenstein2023helping} demonstrate that ensembles reduce but do not eliminate reward hacking. More critically, ensemble methods require multiple expensive reward sources, creating prohibitive costs for practical applications.

\emph{Inference-time approaches} work with existing reward models and intervene in the sampling procedure itself. Of note, \citet{jinnai2024regularized} proposed an approach that involves regularizing the BoN selection with respect to a Wasserstein distance in some embedding space and demonstrated empirical improvement over BoN as N grows.  Unfortunately, this approach requires computation to scale quadratically in $N$, making it somewhat impractical for larger $N$.  Of greatest relevance to the present work is that of \citet{huang2025best}, who propose instead to use an information-theoretic divergence, the $\chi^2$-divergence, to regularize the BoN selection.  They demonstrate that this approach is statistically and computationally optimal under the assumption that the learned reward model is close to the true reward in expected squared error loss for responses sampled from the base model.  While theoretically well-motivated and empirically effective at ensuring monotonicity in $N$, the approach is overly conservative in practice, leading to suboptimal performance when $N$ is moderate.  Another inference-time approach is that of \citet{khalaf2025inference}, who consider a KL-regularized BoN procedure as well as a computationally efficient approximation using the Poisson distribution.  Unfortunately, their method exhibits limited improvement over standard BoN, likely due to the fact that BoN is already effectively KL-regularized unless $N$ is growing exponentially.  Unlike these works, our approach does not seek to provide \emph{distributional} regularization but instead instantiates pessimism through the \emph{reward estimates} themselves.  Thus, our approach is able to leverage the fact that reward models may be imperfect in ways that are not information-theoretically `worst-case' and adapt accordingly.

\paragraph{Curiosity-Driven Exploration and Random Network Distillation.}  Curiosity-driven exploration in reinforcement learning uses prediction error to quantify novelty of given states \citep{pathak2017curiosity, sun2025curiosity, li2023curiosity}. The Intrinsic Curiosity Module (ICM)~\citep{pathak2017curiosity} uses prediction error $\|\hat{\phi}(s_{t+1}) - \phi(s_{t+1})\|_2^2$ as a curiosity signal, while Random Network Distillation (RND)~\citep{burda2018exploration} employs a simpler approach with two networks: a fixed target network $T(s)$ and a trainable predictor $P(s)$ that minimizes $\|P(s) - T(s)\|_2^2$.  In both cases, the supervised learning error serves as a proxy for OOD detection, with high error indicating states that are far outside the distribution (and thus worth exploring from the perspective of online RL).  While these methods were originally developed for exploration, they can instead be instantiated to prevent a policy from moving to far out of the distribution of observed states.  While there has been some debate as to the efficacy of these approaches for OOD detection and pessimistic learning \citep{rezaeifar2022offline, ciosek2019conservative, nikulin2023anti}, our results suggest that when properly implemented, such an approach can be effective in language modeling. 

%% file: app_proofs.tex
\section{Theory }\label{app:proofs}

In this section, we give formal statements for and prove our theoretical results.  We begin in \Cref{ssec:formal-setting} for formalizing the setting we consider in which pessimism can help mitigate reward hacking in Best-of-$N$ sampling.  We emphasize that this setting is a simplified abstraction intended to cleanly showcase the benefits of pessimism and help develop intuition for our approach and is not intended to represent a realistic model of language or reward modeling.  We then continue in \Cref{ssec:pessimism} to prove general results on the performance of BoN and pessimistic selection in our setting.  We proceed in \Cref{ssec:rnd} to introduce a simplified model of our proposed caution approach, motivated by Random Network Distillation (RND) \citep{burda2018exploration}.  We prove in simplified linear and two-layer ReLU settings that this approach can be used to instantiate pessimism under an idealized optimization model.  Finally, we combine these results in \Cref{ssec:main-result} to prove our main theorem that our proposed approach improves over BoN sampling in the model we consider.

\subsection{Formal Setting}\label{ssec:formal-setting}
In order to formalize the setting we consider, we suppose that a learner is given access to a language model $\piref$ that maps a prompt $x$ to a distribution over responses $y$. We will further identify these prompt-response pairs with their \emph{embeddings} in some linear feature space $\rr^d$ and, for the sake of simplicity, consider results on a per-prompt basis.  Thus we will assume a fixed prompt $x$ and write $\piref$ for $\piref(x)$.  We will further suppose that there exists a \emph{ground truth reward function} that is linear in the embedded features, i.e.,
\begin{align}
    \rstar(y) = \inprod{\thetastar}{y}. \label{eq:linear-reward}
\end{align}
Such an abstraction is partially justified by the fact that many modern reward models are linear layers on top of pre-trained Language Models \citep{lambert2024rewardbench,liu2024skywork,wang2024arithmetic,ArmoRM} and we thus simply directly associate the concatenation of prompt and response with its embedding in the final layer of the LM.

The goal of the learning problem is similar to that in \citet{huang2025best}: given access to $y_1, \dots, y_n \sim \piref$ sampled independently, as well as some imperfect proxy reward function $\rhat: \rr^d \to \rr$ (e.g., a learned reward model), we wish to select a response $y_{\ihat}$ with $\ihat \in [n]$ such that $\rstar(y_{\ihat})$ is as large as possible.  Note that if $\rhat$ were in fact a perfect reward model, so that $\rhat = \rstar$, then the optimal strategy would be to simply select $y_{\ihat}$ with $\ihat = \argmax_{i \in [n]} \rhat(y_i)$, which is the popular \emph{Best-of-$N$} (BoN) strategy.  However, as discussed in the main text, this strategy can fail when $\rhat$ is a poor approximation to $\rstar$ due to reward-hacking.

Unlike \citet{huang2025best}, wherein the authors suppose that $\rhat$ and $\rstar$ are close in expected squared error under $\piref$, we will instead suppose that $\rhat$ and $\rstar$ agree on a low dimensional subspace of $\rr^d$ but may be arbitrarily different on the orthogonal complement.  More precisely, suppose that there exists a linear subspace $V \subset \rr^d$ such that $\dim(V) = k \ll d$ and $\rstar(y) = \rhat(y)$ for all $y \in V$.  We will further make the assumption that $\rstar$ and $\rhat$ are linear functions, i.e., $\rhat(y) = \inprod{\thetahat}{y}$ and $\rstar(y) = \inprod{\thetastar}{y}$ for some $\thetahat,\thetastar \in \rr^d$.  To aid our analysis, we will make the simplifying assumption that $\thetastar \in V$, i.e. $\projperp \thetastar = 0$, representing the fact that the true reward is a function only of certain linear features; the error between the learned reward $\rhat$ and the groundtruth $\rstar$ is thus restricted to OOD effects, conceptualized as the orthogonal complement $V^{\perp}$ of $V$. 

Finally, in order to make the analysis tractable, we will assume that $\piref$ is a centred Gaussian distribution with covariance $\Sigma \in \rr^{d \times d}$, i.e., $\piref = \mathcal{N}(0, \Sigma)$.  Note that $\Sigma$ may have full rank $d$ and thus $\piref$ may have full support on $\rr^d$, which ensures that $\rhat \neq \rstar$ on $\vperp$, the orthogonal complement of $V$.

\subsection{Pessimistic Selection and Best-of-$N$ Sampling}\label{ssec:pessimism}
In this section we provide two key bounds on the performance of Best-of-$N$ sampling and pessimistic selection, which will demonstrate the benefits of pessimism in our setting.  Before doing so, we state and prove a simple bound on the best possible performance that can be achieved by any selection strategy.
\begin{proposition}\label{prop:optimal}
    Let $y_1, \dots, y_N \sim \piref$ be independent samples from $\piref$.  Then it holds that
    \begin{align}
        \ee\left[ \max_{i \in [N]} \rstar(y_i) | \thetastar \right] = \norm{\Sigma^{\nicefrac 12} \thetastar} \cdot M_N,
    \end{align}
    where $M_N = \ee\left[ \max_{i \in [N]} Z_i \right]$ and $Z_1, \dots, Z_N$ are i.i.d. $\cN(0,1)$ random variables.  Moreover, it holds that
    \begin{align}
        \sqrt{2 \log(N)} - o(1) \leq M_N \leq \sqrt{2 \log(N)}.
    \end{align}
\end{proposition}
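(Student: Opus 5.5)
The argument has two parts. First we reduce to a statement about the maximum of i.i.d.\ standard Gaussians; then we prove the two-sided bound on $M_N$.

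\textbf{Reduction.} Fix $\thetastar$. Since $y_i \sim \cN(0,\Sigma)$ independently, each $\rstar(y_i) = \inprod{\thetastar}{y_i}$ is a centred Gaussian with variance $\thetastar^\top \Sigma \thetastar = \norm{\Sigma^{\nicefrac 12}\thetastar}^2$, and the $N$ values are independent. If $\Sigma^{\nicefrac12}\thetastar = 0$, both sides vanish. Otherwise write $\rstar(y_i) = \norm{\Sigma^{\nicefrac 12}\thetastar}\, Z_i$ with $Z_i$ i.i.d.\ $\cN(0,1)$. The scale factor is positive and deterministic given $\thetastar$, so it pulls out of the maximum and out of the conditional expectation. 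This gives the identity with $M_N = \ee[\max_i Z_i]$.

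\textbf{Upper bound.} We use the standard Jensen/MGF argument. For any $t>0$,
\begin{align}
e^{t M_N} \le \ee\left[e^{t\max_i Z_i}\right] \le \sum_{i} \ee\left[e^{tZ_i}\right] = N e^{t^2/2}.
\end{align}
Hence $M_N \le \frac{\log N}{t} + \frac{t}{2}$. Choosing $t=\sqrt{2\log N}$ yields $M_N \le \sqrt{2\log N}$ for $N\ge 2$. For $N=1$ we have $M_1 = 0$, so the bound holds trivially.

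\textbf{Lower bound.} This is the main obstacle, and the plan is a truncation argument. Fix $\epsilon\in(0,1)$ and set $u_N = \sqrt{2(1-\epsilon)\log N}$. Decompose
\begin{align}
\ee[\max_i Z_i] \ge u_N\, \Pr(\max_i Z_i \ge u_N) + \ee\left[\max_i Z_i \,\mathbf{1}\{\max_i Z_i < u_N\}\right].
\end{align}
\begin{itemize}
\item For the second term, bound $\max_i Z_i \ge Z_1$, so the term is at least $-\ee|Z_1|\,\Pr(\max_i Z_i<u_N)^{1/2}$ by Cauchy--Schwarz. It is therefore small whenever that probability is small.
\item By the Mills-ratio lower bound $\Pr(Z\ge u)\ge \frac{u}{1+u^2}\phi(u)$, we get $\Pr(Z\ge u_N) \gtrsim N^{-(1-\epsilon)}/\sqrt{\log N}$.
\item Consequently $\Pr(\max_i Z_i<u_N) \le \exp\bigl(-N\Pr(Z\ge u_N)\bigr) \le \exp\bigl(-cN^{\epsilon}/\sqrt{\log N}\bigr) \to 0$.
\end{itemize}
Together these give $M_N \ge \sqrt{1-\epsilon}\sqrt{2\log N} - o(1)$.

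Taking $\epsilon$ fixed only yields $M_N \ge \sqrt{2\log N}(1-o(1))$. To reach an additive $o(1)$ error, let $\epsilon$ depend on $N$: set $u_N = \sqrt{2\log N} - \delta_N$ with $\delta_N = \frac{\log\log N}{\sqrt{2\log N}}\cdot C$ for a suitable constant $C$. With this choice, $N\Pr(Z\ge u_N)\to\infty$ still holds, since $N\phi(u_N)/u_N \approx e^{\delta_N\sqrt{2\log N}}/\log N$, which diverges once $C > 1$. Meanwhile $\delta_N\to 0$, so the same decomposition gives $M_N \ge \sqrt{2\log N} - \delta_N - o(1) = \sqrt{2\log N} - o(1)$.

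Alternatively, if permitted, one can cite the classical extreme-value expansion
\begin{align}
M_N = \sqrt{2\log N} - \frac{\log\log N + \log 4\pi - 2\gamma}{2\sqrt{2\log N}} + o\bigl((\log N)^{-1/2}\bigr),
\end{align}
which gives the lower bound immediately.
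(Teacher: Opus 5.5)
Your proposal is correct, and its first half is the paper's proof: $\inprod{\thetastar}{y_i}$ is a centred Gaussian with variance $\norm{\Sigma^{1/2}\thetastar}^2$, and positive homogeneity of the maximum and the expectation gives the identity. You also handle the degenerate case $\Sigma^{1/2}\thetastar=0$, which the paper skips.

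The difference is in the second claim. The paper does not prove the bounds on $M_N$; it cites them as a classical fact from Wainwright and Vershynin. You prove both sides. The upper bound comes from the MGF/union-bound argument with $t=\sqrt{2\log N}$. The lower bound comes from truncating at a level $u_N$, using $\Pr(\max_i Z_i<u_N)\le\exp(-N\Pr(Z\ge u_N))$ and the Mills-ratio lower bound.

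This gains you something. The textbook facts being cited typically give $M_N\le\sqrt{2\log N}$ and $M_N/\sqrt{2\log N}\to 1$. As you observe, a fixed-$\epsilon$ truncation only yields a multiplicative $(1-o(1))$ factor, not the additive $o(1)$ the proposition states. Your $N$-dependent level $u_N=\sqrt{2\log N}-\delta_N$ with $\delta_N=C\log\log N/\sqrt{2\log N}$, or the extreme-value expansion you quote, is what justifies that form. The paper's route is simply shorter.

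A few small fixes:
\begin{itemize}
\item Cauchy--Schwarz gives $\ee[\abs{Z_1}\mathbf{1}_A]\le(\ee Z_1^2)^{1/2}\Pr(A)^{1/2}=\Pr(A)^{1/2}$, not $\ee\abs{Z_1}\,\Pr(A)^{1/2}$. The constant is harmless, but the inequality as written is not what Cauchy--Schwarz yields.
\item With your $\delta_N$, one gets $N\phi(u_N)/u_N\asymp(\log N)^{C-1/2}$, not $(\log N)^{C-1}$. So $C>1/2$ already suffices, and your choice $C>1$ still works.
\item In the refined step you need $u_N\Pr(\max_i Z_i<u_N)\to 0$, not merely $N\Pr(Z\ge u_N)\to\infty$. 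If $N\Pr(Z\ge u_N)$ grew only like $\tfrac14\log\log N$, that loss term would be unbounded. It does vanish here, because $\Pr(\max_i Z_i<u_N)\le\exp\bigl(-c(\log N)^{C-1/2}\bigr)$ decays faster than any power of $\log N$, but you should state this.
\end{itemize}
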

\begin{proof}
    The second claim is a classical fact about the maximum of Gaussians \citep{wainwright2019high,vershynin2018high}, thus it suffices to prove the first statement.  Note that
    \begin{align}
        \rstar(y) = \inprod{\thetastar}{y} \sim \cN(0, \norm{\Sigma^{\nicefrac 12} \thetastar}^2).
    \end{align}
    Thus $\rstar(y) \stackrel{d}{=} \norm{\Sigma^{\nicefrac 12} \thetastar} Z$ for $Z \sim \cN(0,1)$.  The result then follows by positive homogeneity of the maximum and expectation.
\end{proof}

We now provide a lower bound on the performance of BoN in the setting we consider
\begin{proposition}\label{prop:bon}
    Let $y_1, \dots, y_N \sim \piref$ be independent samples from $\piref$ and let $\ihat = \argmax_{i \in [N]} \rhat(y_i)$ for $\rhat(y) = \inprod{\thetahat}{y}$.  Let $\istar = \argmax_{i \in [N]} \rstar(y_i)$ for $\rstar(y) = \inprod{\thetastar}{y}$.  Then it holds that
    \begin{align}
        \ee\left[ \rstar(y_{\istar}) - \rstar(y_{\ihat}) | \thetahat, \thetastar \right] &= \ee\left[ \rstar(y_{\istar}) \right] \left( 1 - \frac{1}{\sqrt{1 + \frac{\norm{\Sigma^{\nicefrac 12} \projperp \thetahat}^2}{\norm{\Sigma^{\nicefrac 12}\thetastar}}^2}} \right) \\
        &=  \left( 1 - \frac{1}{\sqrt{1 + \frac{\norm{\Sigma^{\nicefrac 12} \projperp \thetahat}^2}{\norm{\Sigma^{\nicefrac 12}\thetastar}}^2}} \right) \cdot \norm{\Sigma^{\nicefrac 12} \thetastar} \cdot M_N.
    \end{align}
\end{proposition}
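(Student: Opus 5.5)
The plan is to compute $\ee[\rstar(y_{\ihat})]$ exactly by a Gaussian regression (conditioning) argument, and then subtract it from the value of $\ee[\rstar(y_{\istar})]$ given by \Cref{prop:optimal}.

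\textbf{Step 1: decompose $\thetahat$.} Since $\rhat=\rstar$ on $V$, both are linear, and $\thetastar\in V$, we get $\projv\thetahat=\projv\thetastar=\thetastar$. Hence
\begin{align}
\thetahat=\thetastar+\projperp\thetahat.
\end{align}
I will use the (implicit) structural assumption that the cross term vanishes:
\begin{align}
\inprod{\Sigma^{\nicefrac12}\thetastar}{\Sigma^{\nicefrac12}\projperp\thetahat}=0.
\end{align}
This holds, for example, when $\Sigma$ commutes with $\projv$, i.e.\ $V$ is $\Sigma$-invariant. This is exactly what makes the displayed ratio appear, so the proof should state it explicitly. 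Under it,
\begin{align}
\Cov(\rstar(y),\rhat(y))=\norm{\Sigma^{\nicefrac12}\thetastar}^2,\qquad \Var(\rhat(y))=\norm{\Sigma^{\nicefrac12}\thetastar}^2+\norm{\Sigma^{\nicefrac12}\projperp\thetahat}^2 .
\end{align}

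\textbf{Step 2: regression of $\rstar$ on $\rhat$.} For each $i$, the pair $(\rstar(y_i),\rhat(y_i))$ is jointly centred Gaussian. We can therefore write
\begin{align}
\rstar(y_i)=\beta\,\rhat(y_i)+\varepsilon_i,\qquad \beta=\frac{\Cov(\rstar(y),\rhat(y))}{\Var(\rhat(y))}.
\end{align}
Here $\varepsilon_i$ is centred Gaussian and independent of $\rhat(y_i)$. By independence across $i$, the whole vector $(\varepsilon_1,\dots,\varepsilon_N)$ is independent of $(\rhat(y_1),\dots,\rhat(y_N))$.

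Because $\ihat$ is a measurable function of the $\rhat(y_j)$ alone (unique almost surely), we have $\ee[\varepsilon_{\ihat}]=\sum_i\ee[\mathbf 1\{\ihat=i\}]\,\ee[\varepsilon_i]=0$. Consequently
\begin{align}
\ee[\rstar(y_{\ihat})]=\beta\,\ee\bigl[\max_i\rhat(y_i)\bigr]=\beta\norm{\Sigma^{\nicefrac12}\thetahat}M_N,
\end{align}
where the last equality is \Cref{prop:optimal} applied with $\thetahat$ in place of $\thetastar$. Simplifying,
\begin{align}
\beta\norm{\Sigma^{\nicefrac12}\thetahat}=\frac{\norm{\Sigma^{\nicefrac12}\thetastar}^2}{\norm{\Sigma^{\nicefrac12}\thetahat}}=\frac{\norm{\Sigma^{\nicefrac12}\thetastar}}{\sqrt{1+\norm{\Sigma^{\nicefrac12}\projperp\thetahat}^2/\norm{\Sigma^{\nicefrac12}\thetastar}^2}}.
\end{align}

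\textbf{Step 3: subtract.} Subtracting this from $\ee[\rstar(y_{\istar})]=\norm{\Sigma^{\nicefrac12}\thetastar}M_N$ (again by \Cref{prop:optimal}) gives both displayed forms of the claim.

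\textbf{Main obstacle.} The delicate step is Step 1's orthogonality. Without it, $\beta$ picks up the cross term and the formula becomes the general correlation expression $1-\rho$, where $\rho$ is the correlation between $\rstar(y)$ and $\rhat(y)$. I would handle this by stating the $\Sigma$-invariance of $V$ as part of the setting, or by proving the general version with $\rho$ and specialising. The remaining steps are routine Gaussian facts.
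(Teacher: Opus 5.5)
Your proposal is correct and matches the paper's proof step for step: regress $\rstar(y)$ on $\rhat(y)$ with coefficient $\Cov(\rstar(y),\rhat(y))/\Var(\rhat(y))$, evaluate $\ee[\max_i \rhat(y_i)] = \norm{\Sigma^{1/2}\thetahat}M_N$ via \Cref{prop:optimal}, and subtract. The cross-term condition $\inprod{\Sigma^{1/2}\thetastar}{\Sigma^{1/2}\projperp\thetahat}=0$ that you flag is in fact used silently by the paper, both when it writes $\inprod{\thetastar}{\Sigma\thetahat}=\norm{\Sigma^{1/2}\thetastar}^2$ and when it splits $\norm{\Sigma^{1/2}\thetahat}^2$ into its $V$ and $V^\perp$ parts; likewise, your argument that the residual vector is independent of the $\rhat$ values (so $\ee[\varepsilon_{\ihat}]=0$) supplies the justification the paper omits when passing from $\ee[\rstar(y)\mid\rhat(y)]$ to $\ee[\rstar(y_{\ihat})]$.
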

\begin{proof}
    As we have already computed the expectation of $\rstar(y_{\istar})$ in \Cref{prop:optimal}, it suffices to compute the conditional expectation of $\rstar(y_{\ihat})$.  Note that because, conditioned on $\thetahat$ and $\thetastar$, $\rhat(y)$ and $\rstar(y)$ are jointly Gaussian, it holds that
    \begin{align}
        \ee\left[ \rstar(y) | \rhat(y) \right] &= \frac{\Cov(\rhat(y), \rstar(y))}{\Var(\rhat(y))} \cdot \rhat(y) \\
        &= \frac{\inprod{\thetastar}{\Sigma \thetahat}}{\norm{\Sigma^{\nicefrac 12} \thetahat}^2} \cdot \rhat(y) \\
        &= \frac{\norm{\Sigma^{\nicefrac 12} \thetastar}^2}{\norm{\Sigma^{\nicefrac 12} \thetahat}^2} \cdot \rhat(y) \\
        &= \frac{\norm{\Sigma^{\nicefrac 12} \thetastar}^2}{\norm{\Sigma^{\nicefrac 12} \projv \thetahat}^2 + \norm{\Sigma^{\nicefrac 12} \projperp \thetahat}^2} \cdot \rhat(y) \\
        &= \frac{\norm{\Sigma^{\nicefrac 12} \thetastar}^2}{\norm{\Sigma^{\nicefrac 12} \thetastar}^2 + \norm{\Sigma^{\nicefrac 12} \projperp \thetahat}^2} \cdot \rhat(y).
    \end{align}
    By the same computation as in \Cref{prop:optimal}, it holds that
    \begin{align}
        \ee\left[ \rhat(y_{\ihat})| \thetastar, \thetahat \right] &= \norm{\Sigma^{\nicefrac 12} \thetahat} \cdot M_N = \sqrt{\norm{\Sigma^{\nicefrac 12} \thetastar}^2 + \norm{\Sigma^{\nicefrac 12} \projperp \thetahat}^2} \cdot M_N.
    \end{align}
    The result follows by plugging in and rearranging.
\end{proof}
We now show that with pessimism instantiated correctly, we can strictly improve on the performance of BoN.
\begin{proposition}\label{prop:pessimistic}
    Let $y_1, \dots, y_N \sim \piref$ be independent samples from $\piref$ and let $\alpha: \rr^d \to \rr_+$ be a function satisfying
    \begin{align}
        (1 - c) \norm{\projperp y} - \epsilon \leq \alpha(y) \leq \norm{\projperp y} + \epsilon
    \end{align}
    for some $c \in (0,1)$ and $\epsilon > 0$.  Let
    \begin{align}
        \ipess = \argmax_{i \in [N]} \rhat(y_i) - \lambda \cdot \alpha(y_i), \qquad \lambda \geq \frac{\norm{\projperp \thetahat}}{1 - c}.
    \end{align}
    Then it holds that
    \begin{align}
        \ee\left[ \rstar(y_{\istar}) - \rstar(y_{\ipess}) \right] \leq \lambda \left( \sqrt{\frac 2\pi \cdot \trace\left( \Sigma \projperp \right)} + 2 \epsilon \right).
    \end{align}
\end{proposition}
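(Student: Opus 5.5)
The plan is a pointwise comparison argument followed by a single Gaussian expectation. Write $\rlcb(y) = \rhat(y) - \lambda \alpha(y)$. Since $\thetastar \in V$ and $\rhat = \rstar$ on $V$, we have $\projv \thetahat = \thetastar$. Hence, for every $y$,
\begin{align}
    \rhat(y) = \rstar(y) + \inprod{\projperp \thetahat}{\projperp y}.
\end{align}

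The key step is to sandwich $\rlcb$ around $\rstar$. Cauchy--Schwarz gives $\abs{\inprod{\projperp \thetahat}{\projperp y}} \le \norm{\projperp \thetahat}\,\norm{\projperp y}$. Combining this with the assumed lower bound $\alpha(y) \ge (1-c)\norm{\projperp y} - \epsilon$ and the choice $\lambda(1-c) \ge \norm{\projperp\thetahat}$, we get the \emph{pessimism} inequality
\begin{align}
    \rlcb(y) \le \rstar(y) + \norm{\projperp\thetahat}\norm{\projperp y} - \lambda(1-c)\norm{\projperp y} + \lambda\epsilon \le \rstar(y) + \lambda \epsilon .
\end{align}
In the other direction, the upper bound $\alpha(y) \le \norm{\projperp y} + \epsilon$ and Cauchy--Schwarz give
\begin{align}
    \rlcb(y) \ge \rstar(y) - \norm{\projperp\thetahat}\norm{\projperp y} - \lambda\norm{\projperp y} - \lambda\epsilon .
\end{align}
Here I would bound $\norm{\projperp\thetahat} \le \lambda(1-c) \le \lambda$, so that $\rlcb(y) \ge \rstar(y) - 2\lambda\norm{\projperp y} - \lambda\epsilon$.

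Now chain the two bounds through the definition of $\ipess$:
\begin{align}
    \rstar(y_{\ipess}) \ge \rlcb(y_{\ipess}) - \lambda\epsilon \ge \rlcb(y_{\istar}) - \lambda\epsilon \ge \rstar(y_{\istar}) - 2\lambda\norm{\projperp y_{\istar}} - 2\lambda\epsilon .
\end{align}
Taking expectations, it remains to bound $\ee\norm{\projperp y_{\istar}}$. The natural route is Jensen: $\ee\norm{\projperp y} \le \sqrt{\trace(\Sigma\projperp)}$ for $y \sim \piref$.

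Two subtleties arise, and I expect them to be the main obstacle.

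The first is the constant. The target bound has $\lambda\sqrt{\tfrac{2}{\pi}\trace(\Sigma\projperp)}$ rather than $2\lambda\sqrt{\trace(\Sigma\projperp)}$. The factor $\sqrt{2/\pi}$ is $\ee\abs{Z}$ for a standard Gaussian. This suggests the authors control a one-dimensional Gaussian, or use a sharper accounting in which the $\norm{\projperp\thetahat}$ term and the $\lambda$ term are not crudely added. I would aim to reorganize the lower bound so that only one factor of $\lambda\norm{\projperp y}$ appears, for instance by absorbing the $-\norm{\projperp\thetahat}\norm{\projperp y}$ term against a slack. If that fails, I would accept a constant of order $\lambda\sqrt{\trace(\Sigma\projperp)}$.

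The second is that $\istar$ depends on the sample, so $y_{\istar}$ is not distributed as $\piref$. I would handle this by conditioning. In the case $\Sigma$ is block-diagonal with respect to $V \oplus \vperp$, $\istar$ is a function only of the $\projv y_i$, so $\projperp y_{\istar}$ is independent of $\istar$ and distributed as the $\vperp$-marginal of $\piref$. In general, I would decompose $\projperp y$ into its regression on $\inprod{\thetastar}{y}$ plus an independent residual. The residual is handled as above, and the correlated part would be absorbed, or assumed away by taking $\Sigma$ to respect the decomposition, as the paper's idealized setting seems to intend.
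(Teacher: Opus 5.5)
Every inequality you write is valid, but together they prove
\begin{align}
\ee\left[\rstar(y_{\istar})-\rstar(y_{\ipess})\right]\le 2\lambda\sqrt{\trace(\Sigma\projperp)}+2\lambda\epsilon
\end{align}
for $\Sigma$ block-diagonal with respect to $V\oplus\vperp$. That is not the stated bound, as you acknowledge.

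The factor $2$ is a real and avoidable loss. It comes from applying Cauchy--Schwarz to the cross term $\inprod{\projperp\thetahat}{\projperp y_{\istar}}$ in your lower sandwich. The paper never forms a lower sandwich. It uses only your upper sandwich at $\ipess$ and the definition of $\ipess$, and then expands exactly at $\istar$:
\begin{align}
\rstar(y_{\istar})-\rlcb(y_{\istar}) \le -\inprod{\projperp\thetahat}{\projperp y_{\istar}}+\lambda\left(\norm{\projperp y_{\istar}}+\epsilon\right).
\end{align}
The cross term is kept signed. For such $\Sigma$, $\istar$ depends only on $(\projv y_i)_i$, which is independent of $(\projperp y_i)_i$. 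Conditioning on $\{\istar=i\}$ then gives $\ee\inprod{\projperp\thetahat}{\projperp y_{\istar}}=0$ and $\ee\norm{\projperp y_{\istar}}=\ee\norm{\projperp y_1}$. You already identified this independence in your second subtlety. Applying it to the cross term instead of Cauchy--Schwarz gives $\lambda(\ee\norm{\projperp y_1}+2\epsilon)$.

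The $\sqrt{2/\pi}$, by contrast, cannot be recovered, and your suspicion was right. The paper obtains it by asserting $\ee\norm{\projperp y_1}=\sqrt{2/\pi}\,\normf{\Sigma^{\nicefrac 12}\projperp}$, which is the one-dimensional identity $\ee\abs{Z}=\sqrt{2/\pi}$. This holds only when $\projperp\Sigma\projperp$ has rank one. Write $\norm{\projperp y_1}^2=\sum_j s_jZ_j^2$ with $s_j$ the eigenvalues. The map $s\mapsto\ee\sqrt{\sum_j s_jZ_j^2}$ is concave, so its minimum over $\{s\ge 0,\ \sum_j s_j=\trace(\Sigma\projperp)\}$ is at a vertex. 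Hence $\ee\norm{\projperp y_1}\ge\sqrt{2/\pi}\sqrt{\trace(\Sigma\projperp)}$, strictly once the rank is at least two. So the argument honestly yields $\lambda(\sqrt{\trace(\Sigma\projperp)}+2\epsilon)$, via your Jensen step.

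The sharper constant in fact fails. Take the following choices:
\begin{itemize}
\item $\rhat=\rstar$ and $c$ close to $1$;
\item $\projperp\Sigma\projperp$ isotropic;
\item $\sigma:=\norm{\Sigma^{\nicefrac 12}\thetastar}$ slightly below $\lambda\sqrt{\trace(\Sigma\projperp)}$;
\item $\alpha(y)=\norm{\projperp y}$ if $\rstar(y)>\sigma(\sqrt{2\log N}-1)$, and $\alpha(y)=\max\{0,(1-c)\norm{\projperp y}-\epsilon\}$ otherwise.
\end{itemize}
With $\dim\vperp\gg\sqrt{\log N}$ and $N$ large, every response above the threshold is penalized by roughly $\lambda\sqrt{\trace(\Sigma\projperp)}$. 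The selected response $\ipess$ therefore falls below the threshold, and the expected loss approaches $\sigma\approx\lambda\sqrt{\trace(\Sigma\projperp)}$, which exceeds the stated bound.

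Finally, drop the option of ``absorbing'' the correlated part for general $\Sigma$. If $\projperp y$ is correlated with $\rstar(y)$, then $\ee\norm{\projperp y_{\istar}}$, and even the loss itself, can grow like $\sqrt{\log N}$. Any $N$-free bound therefore needs block-diagonality. The paper also uses it tacitly, in its ``Gaussian orthogonality'' step and in \Cref{prop:bon}.
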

\begin{proof}
    Let
    \begin{align}
        \rlcb(y) = \rhat(y) - \lambda \cdot \alpha(y).
    \end{align}
    We claim that with the assumption on $\lambda$ and $\alpha$, it holds that $\rlcb(y) \leq \rstar(y) + \lambda \cdot \epsilon$ for all $y \in \rr^d$.  Indeed, note that
    \begin{align}
        \rstar(y) - \rlcb(y) &= \inprod{\thetastar - \thetahat}{y} + \lambda \cdot \alpha(y) \\
        &= \inprod{\projperp \thetahat}{y} + \lambda \cdot \alpha(y) \\
        &\geq - \norm{\projperp \thetahat} \cdot \norm{\projperp y} + \lambda \cdot ((1 - c) \norm{\projperp y} - \epsilon) \\
        &\geq - \lambda \epsilon,
    \end{align}
    where the first inequality is by Cauchy-Schwarz.  
    
    Now observe that
    \begin{align}
        \rstar(y_{\istar}) - \rstar(y_{\ipess}) &= \left[\rstar(y_{\istar}) - \rlcb(y_{\istar})\right] + \left[\rlcb(y_{\istar}) - \rlcb(y_{\ipess})\right] + \left[\rlcb(y_{\ipess}) - \rstar(y_{\ipess})\right] \\
        &\leq \rstar(y_{\istar}) - \rlcb(y_{\istar}) + \lambda \epsilon,
    \end{align}
    where the second group of terms is non-positive by definition of $\ipess$ and the last group of terms is non-positive by the claim above.  To conclude, we observe that
    \begin{align}
        \rstar(y_{\istar}) - \rlcb(y_{\istar}) &= \rstar(y_{\istar}) - \rhat(y_{\istar}) + \lambda \cdot \alpha(y_{\istar}) \\
        &= \inprod{\projperp \thetahat}{y_{\istar}} + \lambda \cdot \left( \norm{\projperp y_{\istar}}  + \epsilon\right).
    \end{align}
    We now observe that as random variables, $\istar$ is independent of the set $\left\{ \projperp y_i | i \in [N] \right\}$ by Gaussian orthogonality and the fact that $\projperp \thetastar = 0$.  Thus it holds first that
    \begin{align}
        \ee\left[ \inprod{\projperp \thetahat}{y_{\istar}} | \thetahat, \thetastar \right] &= \sum_{i  = 1}^N \ee\left[ \inprod{\projperp \thetahat}{y_i}  | \thetahat, \thetastar, \left\{ \istar = i \right\} \right] \cdot \Pr[\istar = i | \thetahat, \thetastar] \\
        &= \sum_{i  = 1}^N \ee\left[ \inprod{\projperp \thetahat}{y_i}  | \thetahat, \thetastar\right] \cdot \Pr[\istar = i | \thetahat, \thetastar] \\
        &= 0,
    \end{align}
    where the last equality follows from the fact that $y_i$ is centred.  For the second term, observe similarly that
    \begin{align}
        \ee\left[ \norm{\projperp y_{\istar}} | \thetahat, \thetastar \right] &= \sum_{i  = 1}^N \ee\left[ \norm{\projperp y_i}  | \thetahat, \thetastar, \left\{ \istar = i \right\} \right] \cdot \Pr[\istar = i | \thetahat, \thetastar] \\
        &= \sum_{i  = 1}^N \ee\left[ \norm{\projperp y_i}  | \thetahat, \thetastar\right] \cdot \Pr[\istar = i | \thetahat, \thetastar] \\
        &= \ee\left[ \norm{\projperp y_1} | \thetahat, \thetastar\right] = \sqrt{\frac{2}{\pi}} \cdot \normf{\Sigma^{\nicefrac 12} \projperp}.
    \end{align}
    The result follows by combining the above.
\end{proof}
Critically, the bound in \Cref{prop:pessimistic} does not depend on $N$ and so, as long as $\rstar(y_{\istar})$ grows with $N$, the pessimistic algorithm will eventually outperform BoN.  We now show that guarantees on $\alpha$ can be obtained through Random Network Distillation.

\subsection{Achieving OOD Detection with Caution}\label{ssec:rnd}

Above we isolated the key role that $\norm{\projperp y}$ plays in the failure of the greedy algorithm.  While in the linear setting, projection to $V$ results in the optimal algorithm, it is impractical in general settings, where $V$ is unknown.  Here we demonstrate that two simplified models of caution, both inspired by Random Network Distillation (RND) \citep{burda2018exploration} can be used to approximate projection to $V$ and thus yield a pessimistic algorithm that avoids the failure modes of the greedy algorithm.  Recall that RND involves training a student network $f_{\what}: \rr^d \to \rr^m$ to predict a teacher network $f_{\wstar}: \rr^d \to \rr^m$ on samples from the the explored distribution, which in this case is supported on $V$.  The error $\norm{f_{\what}(y) - f_{\wstar}(y)}^2$ is then used as a measure of how far out of distribution $y$ is.  We will consider two simplified models of RND.  The first is a linear model, where $f_{W}(y) = W y$ for $w \in \rr^{m \times d}$.  The second is a two-layer ReLU network of the form
\begin{align}
    f(y) = \frac 1T \sum_{\ell = 1}^T f_{W_\ell}(y),
\end{align}
where
\begin{align}
    f_{W}(y) = \frac 1m \cdot U\relu(W y), \qquad \relu(u) = \max\{u, 0\}, \label{eq:relu-net}
\end{align}
and $W = (w_1, \dots, w_m) \in \rr^{m \times d}$, $U \in \rr^{m \times m}$, and $\relu$ is applied coordinate wise.  In both cases, we will suppose that the teacher network is fixed with a randomly initialized parameter and the student network is trained to minimize squared error on samples from $V$.  We abstract the minimization in the following definition.
\begin{definition}\label{def:gradient-based}
    Given a distribution $\mathcal{D}$ on $\rr^d$ and an initial parameter $\Wnought$, we say that $\What$ is trained with \emph{gradient based methods} with samples from $\cD$ if $\What - \Wnought$ is in the linear span of $\{\nabla_W f_W(y)\}_{y \in \mathrm{supp}(\mathcal{D})}$.
\end{definition}
This definition abstracts the precise optimization method used to train the student network, but captures the key property that the final parameter is obtained by following gradients of the squared error loss on samples from $\cD$.  Note that this includes gradient descent and its variants as well as more sophisticated approaches involving preconditioning, momentum, and continuous time limits.

We begin with the simpler linear case.
\begin{proposition}\label{prop:linear-rnd}
    Suppose that $\Wstar, W_0 \in \rr^{m \times d}$ are Gaussian random matrices with independent $\cN(0,\nicefrac 1m)$ entries.  Let $\What$ denote the minimizer of the expected squared error on samples from a distribution with support $V$ attained through gradient based methods from samples $y$ supported in $V$ initialized at $W_0$ in the sense of \Cref{def:gradient-based}.  Then with probability at least $1 - \delta$ over the choice of $\Wstar$ and $W_0$, it holds that for all $y \in \rr^d$,
    \begin{align}
        \left( 1 - C \sqrt{\frac{k + \log(\nicefrac 1\delta)}{m}} \right)^2 \norm{\projperp y}^2 \leq  \norm{f_{\What}(y) - f_{\Wstar}(y)}^2 \leq \left( 1 + C \sqrt{\frac{k + \log(\nicefrac 1\delta)}{m}} \right)^2 \norm{\projperp y}^2.
    \end{align}
    In particular, if as long as $m \gg k$, then $\norm{f_{\What}(y) - f_{\Wstar}(y)}^2$ is a good approximation to $\norm{\projperp y}^2$.
\end{proposition}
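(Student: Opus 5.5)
We first pin down $\What$ exactly, then reduce the claim to the singular values of a Gaussian matrix. For the linear model, $f_W(y) = Wy$, the gradient of any squared-error loss in $W$ at a sample $y$ has the form $v y^\top$ for some $v \in \rr^m$. By \Cref{def:gradient-based}, $\What - W_0$ therefore lies in the span of $\{v y^\top : y \in V\}$, so every row of $\What - W_0$ lies in $V$. Equivalently, $(\What - W_0)\projperp = 0$, i.e.\ $\What \projperp = W_0 \projperp$.

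Next we use that $\What$ minimizes the expected squared error $\ee\norm{(W - \Wstar) y}^2$ over a distribution with support $V$. We assume the second-moment matrix of this distribution has range all of $V$, which is the natural reading of ``support $V$''. Any minimizer must then satisfy $(\What - \Wstar)\projv = 0$. The minimizer is attainable within the affine set $W_0 + \{\text{rows in } V\}$, for instance by taking $W = W_0 + (\Wstar - W_0)\projv$. Combining the two constraints gives the decomposition
\begin{align}
    \What - \Wstar = (\What - \Wstar)\projv + (\What - \Wstar)\projperp = (W_0 - \Wstar)\projperp ,
\end{align}
so $f_{\What}(y) - f_{\Wstar}(y) = G\,\projperp y$ with $G = W_0 - \Wstar$.

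We now control $G$ on $\vperp$. The matrix $G$ has i.i.d.\ $\cN(0, \nicefrac 2m)$ entries; rescaling by $1/\sqrt2$ (or absorbing the factor into the constant) normalizes it to $\cN(0,\nicefrac1m)$ entries. Fix an orthonormal basis $Q \in \rr^{d \times (d-k)}$ of $\vperp$. By rotation invariance, $GQ$ is an $m \times (d-k)$ Gaussian matrix. The claim is then equivalent to its singular values lying in $1 \pm C\sqrt{(\cdot)/m}$, and the standard bound of \citet{vershynin2018high} gives $\sqrt m \pm C(\sqrt{d-k} + \sqrt{\log(1/\delta)})$ for the singular values of an unscaled Gaussian matrix.

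\textbf{Main obstacle.} The stated rate involves $k$, not $d - k$. As written, the two-sided bound uniform over all $y$ requires $m \gtrsim d - k$, which is the dimension of $\vperp$. One cannot have an $m \times (d-k)$ matrix with $m < d-k$ that is nearly isometric. My plan is therefore to prove the bound with $\dim(\vperp)$ in place of $k$, which is the true quantity governing uniformity over $y \in \rr^d$. If the intended statement uses $k$ as the dimension of the OOD subspace, it matches exactly. The step of verifying that $(\What - \Wstar)\projv = 0$ from the minimization, using the full-rank second moment on $V$, is the only other point needing care.
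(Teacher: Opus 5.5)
Your argument follows the paper's proof. The gradient-span condition forces $\What\projperp = W_0\projperp$. Minimization forces $\What\projv = \Wstar\projv$; your explicit use of a second-moment matrix with range $V$ is exactly what the paper's appeal to ``strong convexity'' needs. So $f_{\What}(y) - f_{\Wstar}(y) = (W_0 - \Wstar)\projperp y$, and a Gaussian singular-value bound finishes.

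Your dimension objection is correct, and the defect is in the paper rather than your proof. The paper applies the singular-value bound with $k$, but the relevant column dimension is $\dim \vperp = d-k$. The stated lower bound genuinely fails when $C^2(k+\log(\nicefrac 1\delta)) < m < d-k$: then some nonzero $y \in \vperp$ lies in the kernel of $W_0 - \Wstar$, so the middle term vanishes while the left side is positive.

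One step in your write-up does fail, and the paper's proof makes the same slip. The factor $2$ from the $\cN(0,\nicefrac 2m)$ entries cannot be absorbed into $C$. For fixed $y$, $\norm{(W_0 - \Wstar)\projperp y}^2 \to 2\norm{\projperp y}^2$ as $m \to \infty$, whereas $(1 + C\sqrt{(d-k+\log(\nicefrac 1\delta))/m})^2 \to 1$, so the upper bound breaks for large $m$. Only your rescaling option is valid. It yields the two-sided bound for $\tfrac12\norm{f_{\What}(y) - f_{\Wstar}(y)}^2$, or equivalently the stated bound if the initializations have entries of variance $\nicefrac{1}{2m}$.
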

\begin{proof}
    We first observe that $\What \projperp = W_0 \projperp$ since the training data is supported on $V$.  Indeed, the gradient of the squared error loss on a sample $y \in V$ is given by
    \begin{align}
        \nabla_W \norm{W y - \Wstar y}^2 = 2 (W y - \Wstar y) y^\top = 2 (W - \Wstar) (\projv y )y^\top \projv^\top,
    \end{align}
    where we used the fact that $y = \projv y$ for all $y \in V$.  Thus $\nabla_W \norm{W y - \Wstar y}^2 \projperp = 0$ for all $y \in V$ and the claim follows.  Moreover, it is immediate that $\What \projv = \Wstar \projv$ since the loss is minimized at $\What$ by strong convexity of the loss function.  Thus it holds that
    \begin{align}
        \norm{f_{\What}(y) - f_{\Wstar}(y)}^2 &= \norm{(\What - \Wstar) y}^2 = \norm{(\What - \Wstar) \projperp y}^2 = \norm{(W_0 - \Wstar) \projperp y}^2.
    \end{align}
    Letting $Z = W_0 - \Wstar$, we see that $Z$ is a Gaussian random matrix with independent $\cN(0, \nicefrac 2 m)$ entries.  We may now apply standard results on the singular values of Gaussian random matrices (cf. e.g. \citet[Theorem 4.6.1]{vershynin2018high}) to observe that there is some constant $C$ such that with probability at least $1 - \delta$,
    \begin{align}
        1 - C \sqrt{\frac{k + \log(\nicefrac 1\delta)}{m}} \leq \lambdamin(Z) \leq \lambdamax(Z) \leq 1 + C \sqrt{\frac{k + \log(\nicefrac 1\delta)}{m}}.
    \end{align}
    This suffices to prove the first statement.  The second statement follows immediately.
\end{proof}
Note that the above result shows that the RND error is a good approximation to $\norm{\projperp y}^2$ uniformly over all $y \in \rr^d$ with high probability as long as the embedding dimension is sufficiently large relative to the intrinsic dimension of the explored distribution.  While this is a nice first step, the lack of flexibility of linear functions is a significant limitation.  We now proceed to the hidden layer ReLU network case.

Instead of assuming that $f_W(y)$ is a linear function, we now suppose that $f_W(y)$ is a two-layer ReLU network of the form given in \eqref{eq:relu-net}.  For the sake of simplicity, we assume that only the weights $W$ are trained and that the second layer weights $u_i \sim \cN(0, \nicefrac 1m)$ are fixed.  Note that, while practically unrealistic, the assumption that only a single layer is trained is common in the study of deep learning (cf. e.g. \citep{jacot2018neural,yehudai2019power,song2018mean} and the references therein) and is motivated by the utility of random features \citep{rahimi2007random}.  Our analysis is inspired by that of \citet{melamed2023adversarial}, who consider a similar model of neural networks, but for the very different aim of investigating adversarial robustness.  We show that as long as the number of hidden features $m$ is sufficiently large relative to $\norm{y}$, then the RND error is again a good approximation to $\norm{\projperp y}^2$ uniformly over all $y \in \rr^d$ with bounded norm with high probability.
\begin{theorem}\label{thm:relu-rnd}
    Let $\Wstar, \Wnought \in \rr^{m \times d}$ be Gaussian random matrices with independent $\cN(0, 1)$ entries with rows $\wstar_i$ and $\wnought_i$ respectively.  Let $\ustar_i, u_0 \sim \cN(0, 1)$ and let $f_{\Wstar}$ and $f_{\Wnought}$ be the corresponding ReLU networks as in \eqref{eq:relu-net}.  Suppose (i) that $\What$ is obtained through a gradient-based method from a distribution with support $V$ initialized at $\Wnought$ as in \Cref{def:gradient-based} and (ii) that $f_{\What}(y) = f_{\Wstar}(y)$ for all $y \in V$.  Then it holds with probability at least $1 - \delta$ over the choice of $\Wstar, \Wnought, u_i, u_0$ that simultaneously for all $y \in \rr^d$,
    \begin{align}
        c \norm{\projperp y}^2 - C \norm{y}^2 \sqrt{\frac{d\log(\nicefrac {dm}\delta)}{Tm}} - C \norm{y}^2 \frac{d \log(\nicefrac{dm}\delta)}{Tm} \leq \norm{f_{\What}(y) - f_{\Wstar}(y)}^2
    \end{align}
    and
    \begin{align}
        \norm{f_{\What}(y) - f_{\Wstar}(y)}^2 \leq \norm{\projperp y}^2 + C \norm{y}^2 \sqrt{\frac{d\log(\nicefrac {dm}\delta)}{Tm}} + C \norm{y}^2 \frac{d \log(\nicefrac{dm}\delta)}{Tm},
    \end{align}
    In particular, for $Tm \gg \norm{y}^4$ and up to constants, $\norm{f_{\What}(y) - f_{\Wstar}(y)}^2 \asymp \norm{\projperp y}^2$ with high probability.
\end{theorem}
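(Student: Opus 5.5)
The plan mirrors \Cref{prop:linear-rnd}: first identify exactly which part of the student moves during training, then control the resulting random-feature error with concentration.

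\textbf{Step 1: the student only moves in $V$.} For each row, the gradient of $f_W(y)$ with respect to $w_i$ is proportional to $u_i \mathbf{1}\{\inprod{w_i}{y} > 0\}\, y$. For $y \in V$ this lies in $V$. By \Cref{def:gradient-based}, every row of $\What - \Wnought$ therefore lies in $V$, so
\begin{align}
    \projperp \hat w_i = \projperp \wnought_i \quad \text{for all } i .
\end{align}

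\textbf{Step 2: the student matches the teacher on $V$.} Assumption (ii) says $f_{\What}$ and $f_{\Wstar}$ agree on $V$. The student neuron $\hat w_i$ has $V$-component $\projv \hat w_i$, which is determined by training, and $\vperp$-component equal to that of the fresh random $\wnought_i$. The teacher neuron has independent random components in both $V$ and $\vperp$. Write $y = v + p$ with $v = \projv y$ and $p = \projperp y$. The error $f_{\What}(y) - f_{\Wstar}(y)$ is then an average over $Tm$ neurons of the difference of random ReLU features evaluated at $y$. The $\vperp$-parts of student and teacher weights are independent Gaussians, and the perturbation relative to the behaviour on $V$ is driven by $\inprod{w_i}{p}$.

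\textbf{Step 3: compute the expectation.} Using the infinite-width ReLU kernel, I will show that the expected squared error at fixed $y$ lies between $c\norm{p}^2$ and $\norm{p}^2$.
\begin{itemize}
    \item \emph{Upper bound.} ReLU is $1$-Lipschitz, so each neuron's discrepancy is at most $|\inprod{\wnought_i - \wstar_i}{p}|$ in the relevant directions. This gives the constant $1$ after the normalization by $m$ and $T$ and averaging the $u$'s.
    \item \emph{Lower bound.} This is the step I expect to be the main obstacle. The natural tool is the arc-cosine kernel: the ReLU correlation at angle $\phi$ equals $\kappa(\phi)$, and $\kappa(0) - \kappa(\phi) \gtrsim 1 - \cos\phi$. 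Independence of $\projperp \wnought_i$ and $\projperp \wstar_i$ should make the angle between the student's and teacher's effective features grow proportionally to $\norm{p}/\norm{y}$. The difficulty is that $\projv \hat w_i$ is not explicit. I would use (ii) to argue that only the $V$-restricted functions need to match, and lower-bound the conditional variance of the teacher output given everything restricted to $V$. Concretely, I will show $\Var(\mathrm{ReLU}(\inprod{w}{v+p}) \mid \inprod{w}{v}) \ge c\norm{p}^2$ for Gaussian $w$. Since the teacher's $\vperp$ weights are independent of the student's, this conditional variance survives as an irreducible error.
\end{itemize}

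\textbf{Step 4: uniform concentration.} The squared error is an average over $Tm$ independent neurons of terms that are subexponential at scale $\norm{y}^2$. Bernstein's inequality then gives deviations of order
\begin{align}
    \norm{y}^2\left(\sqrt{\frac{\log(1/\delta)}{Tm}} + \frac{\log(1/\delta)}{Tm}\right).
\end{align}
To make this hold simultaneously for all $y$, I will rely on homogeneity of ReLU and restrict to the unit sphere. I will take an $\epsilon$-net of size $(C/\epsilon)^d$ and control the Lipschitz constants using bounds on $\max_i \norm{w_i} \lesssim \sqrt{d\log(dm/\delta)}$. A union bound over the net then produces the $d\log(dm/\delta)/(Tm)$ terms. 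Combining this with the expectation bounds of Step 3 yields both inequalities in the statement.
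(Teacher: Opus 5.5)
\textbf{The main gap is in your upper bound.} Hypothesis (ii) never does any work in your argument, yet the upper bound is false without it. For example, $\What=\Wnought$ is admissible under \Cref{def:gradient-based}, and then already for $y\in V$ the error has expected squared norm of order $\norm{y}^2$, while your right-hand side is only the small fluctuation term.

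Your Step~3 upper bound pairs student neuron $i$ with teacher neuron $i$ and claims a discrepancy of at most $\abs{\inprod{\wnought_i-\wstar_i}{p}}$. That requires two things, neither of which you have:
\begin{itemize}
\item the two neurons must carry the same output weights, which is false here, since the student uses $\unought$ and the teacher uses the independent $\ustar$;
\item $\projv\hat w_i=\projv\wstar_i$, which you yourself say in the next bullet is unavailable. Hypothesis (ii) constrains only the sum over neurons, not individual neurons.
\end{itemize}
The missing idea is the paper's decomposition (\Cref{lem:relu-rnd-decomp}). Apply (ii) at the single point $\projv y$ to get
\begin{align}
f_{\What}(y)-f_{\Wstar}(y)=\bigl[f_{\What}(y)-f_{\What}(\projv y)\bigr]-\bigl[f_{\Wstar}(y)-f_{\Wstar}(\projv y)\bigr],
\end{align}
so that each network is compared only with itself along the segment from $\projv y$ to $y$. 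By your Step~1, the $i$-th student increment is then at most $\abs{\inprod{\wnought_i}{\projperp y}}$ and the teacher's is at most $\abs{\inprod{\wstar_i}{\projperp y}}$, with no knowledge of $\projv\hat w_i$ required. The paper writes these increments as $\inprod{w_i}{\projperp y}\int_0^1\bbI[\cdots]\,dt$ and uses independence of $\unought$ and $\ustar$ to remove all cross terms. This gives both the upper bound and a lower bound from the teacher term alone.

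\textbf{Your lower bound is a legitimate alternative route, but needs repair.} The paper instead restricts $g(\wstar_i,y)^2$ to the event $\{\inprod{\wstar_i}{\projv y}>0,\ \inprod{\wstar_i}{\projperp y}>0\}$ to obtain $\norm{\projperp y}^2/4$. Your conditioning idea is sound in spirit: training only queries the teacher on $V$, so $\projperp\Wstar$ is independent of $(\What,\unought,\projv\Wstar,\ustar)$. The conditional variance of $f_{\Wstar}(y)$ is therefore an error no student can remove, and this holds even without (ii). However, the inequality you state is false pointwise. With $a=\inprod{w}{v}$ one has $\Var(\relu(a+\norm{p}Z))=\norm{p}^2h(a/\norm{p})$, where $h(t)=\Var(\relu(t+Z))\to0$ as $t\to-\infty$. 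You must average over $a$, for instance by using that $h$ is increasing, so $h\ge h(0)=\tfrac12-\tfrac1{2\pi}$ on $\{a\ge0\}$. The arc-cosine-kernel heuristic does not apply, since the student's $V$-components are trained rather than Gaussian.

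\textbf{Step~4 misdescribes the object being concentrated.} The quantity $\norm{f_{\What}(y)-f_{\Wstar}(y)}^2$ is a quadratic form in the hidden features with cross terms across neurons, not an average of independent per-neuron terms, and the trained student neurons are coupled to one another. The paper conditions on the first-layer weights so that the output coordinates become independent Gaussians in the output weights, which gives chi-square concentration. It then applies Bernstein to $\frac1m\sum_i\bigl(g'(\wstar_i,\wnought_i,y)^2+g(\wstar_i,y)^2\bigr)$. On your route you would instead need a Hanson--Wright-type bound in the conditionally independent teacher features. Your $\epsilon$-net step also needs a Lipschitz bound on $f_{\What}$, and hence on $\What\projv$, which you have not supplied.
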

\begin{proof}
    We first prove the result for $T = 1$.  This step rests four lemmata.  We first show in \Cref{lem:what_w0} the key property that, due to the training data being supported on $V$ and the fact that we are only training the first layer weights, it holds that $\What \projperp = \Wnought \projperp$, i.e., weights in the orthogonal complement of $V$ are never changed during training.  By the assumption that we have trained to convergence, then, it holds that $\Wstar \projv = \What \projv$.  We then use prove in \Cref{lem:relu-rnd-decomp} a decomposition of the RND error that rests on the precise funcitonal form of the ReLU network.  Using this decomposition, we are able to provide bounds on the mean and concentration of the RND error in \Cref{lem:relu-rnd-approx-mean} and \Cref{lem:relu-rnd-concentration} in the special case that $\norm{y} = 1$.  The result then follows by combining these lemmata with the positive homogeneity property of ReLU networks: for any $r > 0$, it holds that $f_W(r y) = r f_W(y)$ for all $y \in \rr^d$  and all $W \in \rr^{m \times d}$.

    Now that the result is proved for $T = 1$, the general case follows by tensorization across the independent $W_\ell$.
\end{proof}
\begin{lemma}\label{lem:what_w0}
    Let $f_{\Wstar}$, $f_{\Wnought}$, and $f_{\What}$ be as in Theorem \ref{thm:relu-rnd}. Then it holds that $\Wnought \projperp = \What \projperp$.
\end{lemma}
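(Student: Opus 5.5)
The plan is to use \Cref{def:gradient-based} directly, mirroring the first step of the proof of \Cref{prop:linear-rnd}. By \Cref{def:gradient-based}, $\What - \Wnought$ lies in the linear span of the gradients $\nabla_W f_W(y)$ (equivalently, of the gradients of the squared-error loss $\norm{f_W(y) - f_{\Wstar}(y)}^2$) over $y \in \mathrm{supp}(\cD) \subseteq V$. It therefore suffices to show that every such gradient, viewed as an $m \times d$ matrix, annihilates $\projperp$ on the right, i.e. $\nabla_W f_W(y)\, \projperp = 0$ for every $y \in V$. Linearity then passes this property to any element of the span, giving $(\What - \Wnought)\projperp = 0$.

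Next I would compute the gradient of the two-layer network \eqref{eq:relu-net} with respect to each row $w_i$ of $W$. Only the first layer is trained and $U$ is fixed, so $f_W(y) = \frac 1m \sum_{i=1}^m u_i \relu(\inprod{w_i}{y})$, where $u_i$ is the $i$-th column of $U$. The chain rule gives, for each output coordinate $j$,
\begin{align}
    \nabla_{w_i} [f_W(y)]_j = \frac 1m U_{ji}\, \mathbf{1}\{\inprod{w_i}{y} > 0\}\, y^\top .
\end{align}
At the kink we take any subgradient, which is still a scalar multiple of $y^\top$. The gradient of the loss is a linear combination of these terms with coefficients $2(f_W(y) - f_{\Wstar}(y))_j$. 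Hence every row of every gradient matrix is a scalar multiple of $y^\top$.

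The conclusion is then short. For $y \in V$ we have $y^\top \projperp = (\projperp y)^\top = 0$, because $\projperp$ is symmetric. So each gradient matrix $G$ satisfies $G \projperp = 0$, hence so does $\What - \Wnought$, and $\What \projperp = \Wnought \projperp$. The same argument applies for each $\ell$ when $T > 1$.

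The main obstacle is interpretive rather than technical. One must make sure that the span in \Cref{def:gradient-based} is over gradients evaluated at \emph{arbitrary} iterates $W$, not only at $\Wnought$. The computation above covers this, since the rank-one structure $(\cdot)\, y^\top$ holds for every $W$. One must also handle the nondifferentiability of $\relu$ at zero by the subgradient convention just described.
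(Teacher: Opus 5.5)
Your proposal is correct and follows the paper's proof. It uses \Cref{def:gradient-based} to reduce the claim to showing $\nabla_W f_W(y)\,\projperp = 0$ for every $y \in V$, which holds because each row of the first-layer gradient is a scalar multiple of $y^\top$ and $y^\top \projperp = 0$. Your explicit handling of the ReLU kink and your remark that the rank-one structure holds at every iterate $W$ are small clarifications that the paper leaves implicit.
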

\begin{proof}
    By \Cref{def:gradient-based} suffices to show that for any $y \in V$, $\left(\nabla_W f_{W}(y)\right) \projperp = 0$.  To see this, observe that
    \begin{align}
        \nabla_W f_W(y)_i = \frac 1m\sum_{j = 1}^m u_{ij} \bbI\left[ \inprod{w_j}{y} > 0 \right] y.
    \end{align}
    Because $y \projperp = 0$ for all $y \in V$, it follows that $\nabla_W f_W(y) \projperp = 0$.  The result follows.
\end{proof}

\begin{lemma}\label{lem:relu-rnd-decomp}
    Let $f_{\Wstar}$, $f_{\Wnought}$, and $f_{\What}$ be as in Theorem \ref{thm:relu-rnd}.  Then with probability at least $1 - \delta$ over the choice of $\Wstar, \Wnought, u_i, u_0$, it holds that for all $y \in \rr^d$,
    \begin{align}
        (f_{\What}(y) - f_{\Wstar}(y))_j &= \frac 1m\left( \sum_{i = 1}^m \inprod{\unought_{ji} \wnought_i}{\projperp y} \int_0^1 \bbI\left[ \inprod{\projv \wstar_i + \projperp\wnought_i}{\projv y + t \projperp y} > 0 \right] dt \right) \\
        &\quad - \frac 1m\left(\sum_{i  =1 }^m \inprod{\ustar_{ji} \wstar_i}{\projperp y} \int_0^1 \bbI\left[ \inprod{\wstar_i}{\projv y + t \projperp y} > 0 \right] dt \right). \label{eq:relu-rnd-decomp}
    \end{align}
\end{lemma}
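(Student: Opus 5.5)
The plan is to write each hidden unit's ReLU as its value on $V$ plus a correction along the segment from $\projv y$ to $y$. For the student, that correction uses only the untouched initial weights. The values on $V$ then cancel against the teacher by hypothesis (ii), and what is left is exactly \eqref{eq:relu-rnd-decomp}.

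\textbf{Step 1: form of the trained rows.} By \Cref{lem:what_w0}, $\What \projperp = \Wnought \projperp$, so each trained row satisfies $\projperp \what_i = \projperp \wnought_i$. I will also use $\projv \what_i = \projv \wstar_i$, i.e.\ $\What \projv = \Wstar \projv$; this is the ``trained to convergence'' consequence of hypothesis (ii), as invoked in the proof of \Cref{thm:relu-rnd}. Together these give
\begin{align}
    \what_i = \projv \wstar_i + \projperp \wnought_i, \qquad i \in [m].
\end{align}
The student keeps the fixed second layer $\unought$ and the teacher uses $\ustar$.

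\textbf{Step 2: path expansion of a single unit.} For any vector $w$ and any $y$, set $g(t) = \relu(\inprod{w}{\projv y + t\projperp y})$. This $g$ is Lipschitz, hence absolutely continuous, with a.e.\ derivative $\inprod{w}{\projperp y}\,\bbI[\inprod{w}{\projv y + t\projperp y} > 0]$. The fundamental theorem of calculus then gives, deterministically for every $y$,
\begin{align}
    \relu(\inprod{w}{y}) = \relu(\inprod{w}{\projv y}) + \inprod{w}{\projperp y}\int_0^1 \bbI\left[\inprod{w}{\projv y + t\projperp y} > 0\right] dt.
\end{align}
I apply this with $w = \what_i$ and with $w = \wstar_i$. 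For the student, Step 1 gives $\inprod{\what_i}{\projv y} = \inprod{\wstar_i}{\projv y}$ and $\inprod{\what_i}{\projperp y} = \inprod{\wnought_i}{\projperp y}$, while the indicator keeps the argument $\inprod{\projv\wstar_i + \projperp \wnought_i}{\projv y + t\projperp y}$. Multiplying by $\unought_{ji}/m$ for the student and by $\ustar_{ji}/m$ for the teacher and summing over $i$ produces the two integral terms in \eqref{eq:relu-rnd-decomp}.

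\textbf{Step 3: cancellation of the on-$V$ terms.} After Step 2, the remaining zeroth-order terms in $(f_{\What}(y) - f_{\Wstar}(y))_j$ are
\begin{align}
    \frac 1m \sum_{i=1}^m \unought_{ji}\relu(\inprod{\what_i}{\projv y}) - \frac 1m \sum_{i=1}^m \ustar_{ji}\relu(\inprod{\wstar_i}{\projv y}).
\end{align}
This is precisely $(f_{\What}(\projv y) - f_{\Wstar}(\projv y))_j$, which vanishes by hypothesis (ii) because $\projv y \in V$. This yields the identity for all $y$ simultaneously.

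\textbf{Main obstacle.} The delicate point is justifying $\What\projv = \Wstar\projv$ rather than only functional equality on $V$, and this is where the probability $1-\delta$ enters. I would first try to argue identifiability of the restricted network: with probability one over the Gaussian draws, the directions $\projv \wstar_i$ are pairwise non-parallel and all $\ustar_{ji}$ are nonzero. The kinks of $y \mapsto f(y)$ on $V$ then sit on distinct hyperplanes, and matching them should force the student's rows to agree with the teacher's up to the positive homogeneity $(u,w) \mapsto (u/a, aw)$. That homogeneity is excluded here because the second layers are fixed. The fallback is to take $\What\projv = \Wstar\projv$ as the convergence assumption, exactly as the proof of \Cref{thm:relu-rnd} does. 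Either way, Steps 2 and 3 are deterministic and hold uniformly in $y$.
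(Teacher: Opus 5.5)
Your Steps 1--3 are essentially the paper's proof: the fundamental theorem of calculus along the segment from $\projv y$ to $y$ (you apply it unit by unit rather than through $\nabla_y f_W$), the substitution $\what_i = \projv \wstar_i + \projperp \wnought_i$, and cancellation of $f_{\What}(\projv y) - f_{\Wstar}(\projv y)$, which you rightly attribute to hypothesis (ii) (the paper cites $\What\projv = \Wstar\projv$ at that point, which would not suffice by itself because the second layers $\unought$ and $\ustar$ differ). Your fallback of taking $\What\projv = \Wstar\projv$ as the convergence assumption is exactly what the paper does, but the identifiability sketch would not deliver it: since $\unought$ and $\ustar$ are independent, fixing the second layers does not exclude rescalings (matching kinks would in fact require them), and (ii) together with $\What\projv = \Wstar\projv$ would force $(\unought - \ustar)\relu(\Wstar y) = 0$ for all $y \in V$, which fails almost surely because $\unought - \ustar$ is almost surely invertible.
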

\begin{proof}
    By the fundamental theorem of calculus, we have that
    \begin{align}
        m(f_{\What}(y) - f_{\What}(\projv y))_j &= \int_0^1 \inprod{\nabla_y f_{\What}(\projv y + t (\projperp y - \projv y))_j}{y - \projv y} dt \\
        &= \int_0^1 \inprod{\nabla_y f_{\What}(\projv y + t \projperp y)_j}{\projperp y} dt.
    \end{align}
    For any $W$, it holds that
    \begin{align}
        m\nabla_y f_{W}(y)_j = \sum_{i = 1}^m u_{ji} w_i \bbI\left[ \inprod{w_i}{y} > 0 \right]. \label{eq:grad-relu-y}
    \end{align}
    Thus by the linearity of the integral and \Cref{lem:what_w0}, it holds that
    \begin{align}
        m(f_{\What}(y) - f_{\What}(\projv y))_j &= \sum_{i = 1}^m \inprod{u_{ji} \what_i}{\projperp y} \int_0^1 \bbI\left[ \inprod{\what_i}{\projv y + t \projperp y} > 0 \right] dt \\
        &= \sum_{i = 1}^m \inprod{u_{ji} \wnought_i}{\projperp y} \int_0^1 \bbI\left[ \inprod{\projv \wstar_i + \projperp\wnought_i}{\projv y + t \projperp y} > 0 \right] dt,
    \end{align}
    where we used the fact that $\What = \Wstar \projv + \Wnought \projperp$ by \Cref{lem:what_w0}. A similar, but simpler, argument applies to $f_{\Wstar}$.  We now use the fact that $\What \projv = \Wstar \projv$ to observe that
    \begin{align}
        f_{\What}(y) - f_{\Wstar}(y) &= f_{\What}(y) - f_{\What}(\projv y) - (f_{\Wstar}(y) - f_{\Wstar}(\projv y))
    \end{align}
    and the result follows.
\end{proof}
\begin{lemma}\label{lem:relu-rnd-approx-mean}
    Let $f_{\Wstar}$, $f_{\Wnought}$, and $f_{\What}$ be as in Theorem \ref{thm:relu-rnd}.  Then it holds that
    \begin{align}
         \frac{\norm{\projperp y}^2}{4} \leq \ee\left[\norm{f_{\What}(y) - f_{\Wstar}(y)}^2 \right] \leq \norm{\projperp y}^2.
    \end{align}
\end{lemma}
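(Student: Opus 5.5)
The plan is to start from the exact decomposition of \Cref{lem:relu-rnd-decomp} and average first over the second-layer weights, then over the first-layer weights. Fix $y$ and write $g_i = \inprod{\wnought_i}{\projperp y}$ and $h_i = \inprod{\wstar_i}{\projperp y}$. Let
\begin{align}
A_i = \int_0^1 \bbI\left[ \inprod{\projv \wstar_i + \projperp \wnought_i}{\projv y + t\projperp y} > 0 \right] dt, \qquad B_i = \int_0^1 \bbI\left[ \inprod{\wstar_i}{\projv y + t\projperp y} > 0 \right] dt,
\end{align}
so that $A_i, B_i \in [0,1]$ and
\begin{align}
(f_{\What}(y) - f_{\Wstar}(y))_j = \frac 1m \sum_{i=1}^m \left( \unought_{ji}\, g_i A_i - \ustar_{ji}\, h_i B_i \right).
\end{align}
The quantities $g_i, h_i, A_i, B_i$ depend only on the first-layer matrices. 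The second-layer weights $\unought_{ji}, \ustar_{ji}$ are i.i.d. $\cN(0,1)$ and independent of those matrices, as in the decomposition, which uses distinct student and teacher heads. Conditioning on $\Wstar, \Wnought$, every cross term then has mean zero, and summing over the $m$ coordinates $j$ gives
\begin{align}
\ee\left[ \norm{f_{\What}(y) - f_{\Wstar}(y)}^2 \right] = \frac 1m \sum_{i=1}^m \ee\left[ g_i^2 A_i^2 + h_i^2 B_i^2 \right] = \ee\left[ g_1^2 A_1^2 \right] + \ee\left[ h_1^2 B_1^2 \right],
\end{align}
where the last step uses that the rows are i.i.d.

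Everything now reduces to a one-neuron computation, and the key tool is a sign-symmetry argument. Consider the student term. Since $\inprod{\projperp \wnought_1}{\projperp y} = g_1$, we have
\begin{align}
\inprod{\projv \wstar_1 + \projperp \wnought_1}{\projv y + t\projperp y} = s + t g_1, \qquad s = \inprod{\projv \wstar_1}{\projv y}.
\end{align}
Here $s$ and $g_1$ are independent centred Gaussians, so $(s, g_1) \stackrel{d}{=} (-s, -g_1)$. Consequently $\ee[g_1^2 \bbI[s + t g_1 > 0]] = \ee[g_1^2 \bbI[s + t g_1 < 0]]$, and the event $s + t g_1 = 0$ has probability zero. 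Both expectations therefore equal $\tfrac12 \ee[g_1^2] = \tfrac12 \norm{\projperp y}^2$. Integrating over $t$ by Fubini gives $\ee[g_1^2 A_1] = \tfrac12 \norm{\projperp y}^2$.

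The same argument applies to the teacher term. Write $\inprod{\wstar_1}{\projv y + t \projperp y} = \inprod{\projv \wstar_1}{\projv y} + t h_1$. The projections $\projv \wstar_1$ and $\projperp \wstar_1$ are independent, so the pair is again sign-symmetric, and $\ee[h_1^2 B_1] = \tfrac12 \norm{\projperp y}^2$.

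For the upper bound, use $A_1^2 \le A_1$ and $B_1^2 \le B_1$, which gives
\begin{align}
\ee\left[ \norm{f_{\What}(y) - f_{\Wstar}(y)}^2 \right] \le \ee[g_1^2 A_1] + \ee[h_1^2 B_1] = \norm{\projperp y}^2.
\end{align}
For the lower bound, apply Cauchy--Schwarz in the form $\left(\ee[g_1^2 A_1]\right)^2 \le \ee[g_1^2]\, \ee[g_1^2 A_1^2]$. This yields $\ee[g_1^2 A_1^2] \ge \tfrac14 \norm{\projperp y}^2$, and likewise for the teacher term. The sum is then at least $\tfrac12 \norm{\projperp y}^2$, which is stronger than the claimed $\tfrac14$. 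We could also simply discard the teacher term, using only that it is nonnegative, to obtain the stated $\tfrac14$.

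I expect the main obstacle to be the bookkeeping in the first step, namely justifying that the cross terms vanish. This requires $\unought$ and $\ustar$ to be independent and mean zero, and independent of the first-layer weights. If the student and teacher instead shared the second layer $U$, an extra cross term $-2\ee[g_1 h_1 A_1 B_1]$ would appear. The upper bound would then need a further argument, for example bounding this term via the independence of $g_1$ from $(\projv \wstar_1, h_1)$ together with the same sign symmetry. I would first verify that the normalization in \eqref{eq:relu-net}, with $\cN(0,1)$ entries and the $\nicefrac 1m$ prefactor, indeed makes the sum over $j$ and $i$ collapse to a single-neuron expectation with constant exactly one, as computed above.
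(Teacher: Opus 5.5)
Your proof is correct. It shares the paper's first step: the decomposition of \Cref{lem:relu-rnd-decomp}, independence of the unit-variance heads $\unought,\ustar$ (which the paper also takes to be distinct) to kill the cross terms, and reduction to a single neuron. The two routes diverge in how that single-neuron term is bounded.

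The paper notes that the student and teacher terms are identically distributed and writes each coordinate's second moment as $\frac 2m \ee[g(\wstar_1,y)^2]$. For the upper bound it bounds the indicator integral by $1$. For the lower bound it restricts to the event $\{\inprod{\wstar_1}{\projv y} > 0,\ \inprod{\wstar_1}{\projperp y} > 0\}$, on which the integral equals $1$, and gets $\ee[g(\wstar_1,y)^2] \geq \frac14 \norm{\projperp y}^2$ from independence of the two inner products.

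You instead compute $\ee[g_1^2 A_1] = \frac12 \norm{\projperp y}^2$ exactly via sign symmetry, then use $A_1^2 \leq A_1$ and Cauchy--Schwarz. This is what gives you the stated constant. Once the factor $2$ in $\frac 2m \ee[g(\wstar_1,y)^2]$ is tracked, the paper's crude bound only yields $2\norm{\projperp y}^2$; its final display silently drops that factor. Your $A_1^2 \leq A_1$ step gives exactly $\norm{\projperp y}^2$, which is tight for $y \in \vperp$.

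Both routes give the lower bound $\frac12 \norm{\projperp y}^2$, stronger than the claimed $\frac14$. Yours also covers the case $\projv y = 0$ uniformly. There the paper's quadrant event is empty as written, though replacing the first indicator's $>0$ with $\geq 0$ fixes it.
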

\begin{proof}
    Let
    \begin{align}
        g(w, y) = \inprod{w}{\projperp y} \int_0^1 \bbI\left[ \inprod{w}{\projv y + t \projperp y} > 0 \right] dt \label{eq:g-def}
    \end{align}
    and let
    \begin{align}
        g'(w,w', y) = \inprod{w'}{\projperp y} \int_0^1 \bbI\left[ \inprod{\projv w + \projperp w'}{\projv y + t \projperp y} > 0 \right] dt. \label{eq:gprime-def}
    \end{align}
    By \Cref{lem:relu-rnd-decomp} it holds that
    \begin{align}
        \ee\left[ \left( f_{\What}(y) - f_{\Wstar}(y) \right)_j^2 \right] &= \frac 1{m^2}\ee\left[ \left( \sum_{i = 1}^m \unought_{ji} g'(\wstar_i, \wnought_i, y) - \ustar_{ji} g( \wstar_i, y) \right)^2 \right] \\
        &= \frac 2m \ee\left[  g(\wstar_1, y)^2 \right],
    \end{align}
    because the $\unought_{ji}$ and $\ustar_{ji}$ are independent and have variance $1$ and $g'(\wstar_i, \wnought_i, y)$ and $g( \wstar_i, y)$ are independent of $\unought_{ji}$ and $\ustar_{ji}$ and identically distributed. Note now that
    \begin{align}
        g(\wstar_i, y)^2 \leq \inprod{\projperp y}{\projperp y}^2
    \end{align}
    and thus has expectation at most $\norm{\projperp y}^2$.  For the lower bound, observe that 
     \begin{align}
        g(\wstar_i, y)^2 &= \int_0^1 \int_0^1 \inprod{\wstar_i}{\projperp y}^2 \bbI\left[ \inprod{\wstar_i}{\projv y + t \projperp y} > 0 \right] \bbI\left[ \inprod{\wstar_i}{\projv y + s \projperp y} > 0 \right] ds d t \\
        &\geq \int_0^1 \int_0^1 \inprod{\wstar_i}{\projperp y}^2 \bbI\left[ \inprod{\wstar_i}{\projv y} > 0 \right] \bbI\left[ \inprod{\wstar_i}{\projperp y} > 0 \right] ds d t \\
        &\geq \frac{\inprod{\wstar_i}{\projperp y}^2}{4}.
    \end{align}
    Thus
    \begin{align}
        \frac{\norm{\projperp y}^2}{4m} = \frac{\inprod{\wstar_i}{\projperp y}^2}{4m} \leq \ee\left[ (f_{\What}(y) - f_{\Wstar}(y))_j^2 \right] \leq \frac{\ee\left[ \inprod{\wstar_i}{\projperp y}^2 \right]}{m} = \frac{\norm{\projperp y}^2}{m}.
    \end{align}
    Summing over $j$ gives the result.
\end{proof}
\begin{lemma}\label{lem:relu-rnd-concentration}
    Let $f_{\What}$, $f_{\Wnought}$, and $f_{\Wstar}$ be as in Theorem \ref{thm:relu-rnd}.  Then with probability at least $1 - \delta$ over the choice of $\Wstar, \Wnought, u_i, u_0$, it holds that uniformly in $y$ for $\norm{y} = 1$,
    \begin{align}
        \abs{\norm{f_{\What}(y) - f_{\Wstar}(y)}^2 - \ee\left[\norm{f_{\What}(y) - f_{\Wstar}(y)}^2\right]} &\leq C \left( \sqrt{\frac{d \log(\nicefrac {d m}\delta)}{m}} + \frac{d \log(\nicefrac {d m}\delta)}{m} \right).
    \end{align}
\end{lemma}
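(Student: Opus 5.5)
The plan is a pointwise concentration bound at a fixed unit vector, followed by a union bound over a net of the sphere and a Lipschitz-type extension off the net.

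\textbf{Step 1 (decomposition and pointwise concentration).} Fix $y$ with $\norm{y}=1$. By \Cref{lem:relu-rnd-decomp}, each coordinate is
\begin{align}
    (f_{\What}(y)-f_{\Wstar}(y))_j = \frac 1m \sum_{i=1}^m \left( \unought_{ji}\, g'(\wstar_i,\wnought_i,y) - \ustar_{ji}\, g(\wstar_i,y) \right),
\end{align}
with $g,g'$ as in \eqref{eq:g-def} and \eqref{eq:gprime-def}. Set $a_i = g'(\wstar_i,\wnought_i,y)$ and $b_i = g(\wstar_i,y)$. These are functions of the rows $i$ only, so they are independent across $i$. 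Since $\abs{a_i} \leq \abs{\inprod{\wnought_i}{\projperp y}}$ and $\abs{b_i} \leq \abs{\inprod{\wstar_i}{\projperp y}}$, both are subgaussian with parameter $O(1)$.

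Condition on $(\Wstar,\Wnought)$. Then the vector $m(f_{\What}-f_{\Wstar})(y)$ has i.i.d.\ Gaussian coordinates with variance $s^2 = \sum_i (a_i^2+b_i^2)$. Hence
\begin{align}
    \norm{f_{\What}(y)-f_{\Wstar}(y)}^2 = \frac{s^2}{m^2}\, \chi^2_m .
\end{align}
Two concentration statements then combine:
\begin{itemize}
    \item $\chi^2_m/m$ concentrates at $1$ with deviation $O\bigl(\sqrt{\log(1/\delta)/m} + \log(1/\delta)/m\bigr)$ by Laurent--Massart.
    \item $s^2/m$ is an average of $m$ independent subexponential variables, so Bernstein gives deviation of the same order around its mean $\ee[a_1^2+b_1^2]$.
\end{itemize}
Multiplying the two gives a pointwise bound of the form $\sqrt{\log(1/\delta)/m}+\log(1/\delta)/m$ around the mean already computed in \Cref{lem:relu-rnd-approx-mean}. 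One caveat: there the mean used $\ee[g(\wstar_1,y)^2]$ for both terms, and I would recheck that symmetry, though for concentration only the mean of $s^2/m$ matters.

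\textbf{Step 2 (uniformity via a net).} Take an $\eta$-net $\cN_\eta$ of the unit sphere $S^{d-1}$ with $\abs{\cN_\eta} \leq (3/\eta)^d$. Union-bounding Step 1 over the net replaces $\log(1/\delta)$ by $d\log(3/\eta)+\log(1/\delta)$. Choosing $\eta = 1/\mathrm{poly}(dm)$ yields exactly the $d\log(dm/\delta)$ in the statement.

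\textbf{Step 3 (extension off the net).} To pass from the net to all unit $y$, I avoid the discontinuous indicator representation and work directly with $f_W(y) = \frac 1m U\relu(Wy)$. This map is Lipschitz in $y$ with constant $\frac 1m\norm{U}_{\mathrm{op}}\norm{W}_{\mathrm{op}}$. With $U$ having $\cN(0,1)$ entries this is $O\bigl((\sqrt m+\sqrt d)/\sqrt m\bigr)$ with high probability. The same holds for $\What$, since $\What = \Wstar\projv + \Wnought\projperp$ by \Cref{lem:what_w0}, giving $\norm{\What}_{\mathrm{op}} \leq \norm{\Wstar}_{\mathrm{op}} + \norm{\Wnought}_{\mathrm{op}}$. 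Consequently $y \mapsto \norm{f_{\What}(y)-f_{\Wstar}(y)}^2$ is Lipschitz on the sphere with constant $\mathrm{poly}(d,m)$, also with high probability.

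The mean $\ee\norm{f_{\What}(y)-f_{\Wstar}(y)}^2$ is Lipschitz as well. This can be shown by the same operator-norm bound in expectation, or via the explicit formula $\frac 2m\,\ee\, g^2$, which is bounded by $\norm{\projperp y}^2$-type quantities.

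With $\eta$ polynomially small, the discretization error from both Lipschitz bounds is at most $1/m$ and is absorbed into the stated bound.

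\textbf{Main obstacle.} I expect the hardest part to be Step 3: getting Lipschitz control despite the indicator functions in the decomposition, which is why I go through the raw network rather than through $g,g'$. A secondary concern is keeping the subexponential constants uniform over the net, which requires $\norm{\projperp y} \leq 1$.
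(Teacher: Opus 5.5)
Your proposal is correct and follows essentially the same route as the paper. You condition on the first-layer weights so that $\norm{f_{\What}(y)-f_{\Wstar}(y)}^2 = (s^2/m^2)\,\chi^2_m$, concentrate the chi-square factor and the subexponential average $s^2/m$ separately (via Bernstein), union-bound over a net of the sphere, and extend off the net using Lipschitz bounds on $f_{\What}$, $f_{\Wstar}$ and the mean. Your operator-norm Lipschitz constant $O(1+\sqrt{d/m})$ is slightly more careful than the paper's unqualified ``$C$-Lipschitz''. As in the paper, multiplying the two deviations leaves a cross term of order $(\log(\nicefrac 1\delta)/m)^2$, so the displayed bound is really justified only when $d\log(\nicefrac{dm}{\delta}) \lesssim m$.
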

\begin{proof}
    Begin by noting that by positive homogeneity, it suffices to set $\norm{y} = 1$.
    Continuing with the notation introduced in \eqref{eq:g-def} and \eqref{eq:gprime-def} we see that, conditional on $\wstar_i, \wnought_i$, it holds that $(f_{\What}(y) - f_{\Wstar}(y))_j$ are independent centred Gaussians with variancse given by 
    \begin{align}
        \Var((f_{\What}(y) - f_{\Wstar}(y))_j | \wstar_i, \wnought_i) &= \frac 1{m^2} \sum_{i = 1}^m g'(\wstar_i, \wnought_i, y)^2 + g(\wstar_i, y)^2.
    \end{align}
    Thus by standard bounds on the concentration of norm of Gaussian vectors, it holds that with probability at least $1 - \delta$,
    \begin{align}
        &\abs{\norm{f_{\What}(y) - f_{\Wstar}(y)}^2 -\frac 1m \sum_{i = 1}^m g'(\wstar_i, \wnought_i, y)^2 + g(\wstar_i, y)^2} \\
        &\leq C \sqrt{m \cdot \left( \frac 1{m^2} \sum_{i = 1}^m g'(\wstar_i, \wnought_i, y)^2 + g(\wstar_i, y)^2 \right)^2 \log(\nicefrac 1\delta)} \\
        &\quad + C \left( \frac 1{m^2} \sum_{i = 1}^m g'(\wstar_i, \wnought_i, y)^2 + g(\wstar_i, y)^2 \right) \log(\nicefrac 1\delta).
    \end{align}
    Letting
    \begin{align}
        G(\wstar, \wnought, y) = \frac 1m \sum_{i = 1}^m g'(\wstar_i, \wnought_i, y)^2 + g(\wstar_i, y)^2,
    \end{align}
    we see that
    \begin{align}\label{eq:concentration-u}
        \abs{\norm{f_{\What}(y) - f_{\Wstar}(y)}^2 - G(\wstar, \wnought, y)} &\leq C \cdot  G(\wstar, \wnought, y)\left(\sqrt{\frac{\log(\nicefrac 1\delta)}{m}} + \frac{\log(\nicefrac 1\delta)}{m}\right).
    \end{align}
    We now demonstrate that $G(\wstar, \wnought, y)$ concentrates around its mean for fixed $y$.  To do this, we will first observe that $g(w,y)$ and $g'(w,w',y)$ are identically distributed and thus it suffices to show concentration for $g(w,y)$ and apply a union bound.  Indeed, we have that 
    \begin{align}
        g(\wstar_i, y)^2 \leq \inprod{\wstar_i}{\projperp y}^2
    \end{align} 
    and thus has Orlicz $\psi_1$ norm  at most $C \norm{\projperp y}^2$ for some constant $C$.  Thus by standard concentration results for sums of independent subexponential random variables (cf. e.g. \citet{vershynin2018high,wainwright2019high}), it holds that with probability at least $1 - \delta$,
    \begin{align}
        \abs{\frac 1m \sum_{i = 1}^m g(\wstar_i, y)^2 - \ee\left[g(\wstar_i, y)^2\right]} &\leq C \norm{\projperp y}^2 \left( \sqrt{\frac{\log(\nicefrac 1\delta)}{m}} + \frac{\log(\nicefrac 1\delta)}{m} \right).
    \end{align}
    Combining this argument with the triangle inequality and \eqref{eq:concentration-u} along with the fact that $\norm{y} = 1$ and \Cref{lem:relu-rnd-approx-mean} gives that with probability at least $1 - \delta$,
    \begin{align}
        \abs{\norm{f_{\What}(y) - f_{\Wstar}(y)}^2 - \ee\left[\norm{f_{\What}(y) - f_{\Wstar}(y)}^2\right]} &\leq C \left( \sqrt{\frac{\log(\nicefrac 1\delta)}{m}} + \frac{\log(\nicefrac 1\delta)}{m} \right).
    \end{align}
    We now observe that by standard high probability bounds on the operator norms of Gaussian random matrices (cf. e.g. \citet[Theorem 4.6.1]{vershynin2018high}), it holds that with probability at least $1 - \delta$, that $f_{\What}$ and $f_{\Wstar}$ are $C$-Lipschitz as is $\ee\left[ g(\wstar_i, y)^2 \right]$ in $y$ for $\norm{y} = 1$.  Thus by a standard covering argument on the unit sphere and a union bound, the result follows.
\end{proof}

\subsection{Main Result}\label{ssec:main-result}
We now state our main result, which combines the analysis of pessimism in \Cref{ssec:pessimism} with the analysis of caution in \Cref{ssec:rnd}.
\begin{theorem}\label{thm:main}
    Let $V$ be a $k$-dimensional subspace of $\rr^d$ and let $\Sigma$ be a positive semidefinite matrix such that $\trace(\Sigma \projperp) > 0$.  Let $\{y_i\}_{i = 1}^N$ be i.i.d. samples from $\cN(0, \Sigma)$ and let $\rstar(y) = \inprod{\thetastar}{y}$ for some $\thetastar \in V$.  Suppose that $\thetahat \in \rr^d$ such that $\projv \thetahat = \thetastar$ and let $\rhat(y) = \inprod{\thetahat}{y}$.  Let $\rlcb(y) = \rhat(y) - \lambda \cdot \alpha(y)$, for $\alpha(y) = \norm{f_{\What}(y) - f_{\Wstar}(y)}$ with $f_W(y)$ being either the linear model considered in \Cref{prop:linear-rnd} or the one hidden layer ReLU network considered in \Cref{thm:relu-rnd}.  Let $\ihat = \argmax_{i \in [N]} \rlcb(y_i)$ and $\istar = \argmax_{i \in [N]} \rstar(y_i)$.  In the case that $f_W$ is linear, as long as $m \gtrsim \frac{k(d-k) \norm{\projperp \thetahat}^2}{\log(N)}$ and $\lambda = \Theta(\norm{\projperp \thetahat})$, it holds that
    \begin{align}
        \ee\left[ \rstar(y_{\ipess}) - \rstar(y_{\ihat}) \right] \gtrsim \sqrt{\log(N)}.
    \end{align}
    In the case that $f_W$ is a one hidden layer ReLU network, as long as $Tm \gtrsim \frac{d(d - k) \norm{\projperp \thetahat}^2}{\log(N)}$, the same holds with $\lambda = \Theta(\norm{\projperp \thetahat})$.   Moreover, in both of these cases it holds that
    \begin{align}
        \lim_{N \uparrow \infty } \frac{\ee\left[ \rstar(y_{\istar}) - \rstar(y_{\ipess}) \right]}{\ee\left[ \rstar(y_{\istar}) \right]} = 0 < \lim_{N \uparrow \infty} \frac{\ee\left[ \rstar(y_{\istar}) - \rstar(y_{\ihat}) \right]}{\ee\left[ \rstar(y_{\istar}) \right]}.
    \end{align}
\end{theorem}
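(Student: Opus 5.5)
The plan is to combine \Cref{prop:bon} (exact BoN regret) with \Cref{prop:pessimistic} (an $N$-independent bound on pessimistic regret). The RND results, \Cref{prop:linear-rnd} and \Cref{thm:relu-rnd}, serve only to verify the sandwich hypothesis on $\alpha$. In the statement, $\ihat$ is meant as the BoN index $\argmax_i \rhat(y_i)$ and $\ipess$ as the pessimistic index. Throughout, write $a = \norm{\Sigma^{\nicefrac 12}\thetastar}$ and $b = \norm{\Sigma^{\nicefrac 12}\projperp\thetahat}$.

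\textbf{Step 1: BoN side.} By \Cref{prop:bon}, $\ee[\rstar(y_{\ihat})] = \frac{a^2}{\sqrt{a^2+b^2}} M_N$. Hence
\begin{align}
\ee[\rstar(y_{\istar}) - \rstar(y_{\ihat})] = \kappa\, a M_N,\qquad \kappa = 1 - \frac{1}{\sqrt{1+b^2/a^2}},
\end{align}
and $\kappa > 0$ whenever $b>0$. By \Cref{prop:optimal}, $M_N \asymp \sqrt{2\log N}$, so the BoN regret ratio converges to $\kappa>0$. This gives the strict right-hand side of the limit. When $b=0$ the claim degenerates, so I would implicitly assume $\Sigma^{\nicefrac 12}\projperp\thetahat\neq 0$.

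\textbf{Step 2: verify the hypothesis of \Cref{prop:pessimistic}.} \emph{Linear case.} \Cref{prop:linear-rnd} gives, with probability $1-\delta$ and uniformly in $y$, $(1-\eta)\norm{\projperp y}\le \alpha(y)\le(1+\eta)\norm{\projperp y}$ with $\eta = C\sqrt{(k+\log(1/\delta))/m}$. I would choose $m$ large enough that $\eta\le 1/2$, set $c=1/2$, and absorb the upper slack into $\epsilon = \eta\norm{\projperp y}$. Because this $\epsilon$ depends on $y$, I would rerun the proof of \Cref{prop:pessimistic} instead of citing it. The only place $\epsilon$ enters is the term $\lambda\,\ee[\alpha(y_{\istar})]$, and $\alpha(y_{\istar})\le(1+\eta)\norm{\projperp y_{\istar}}$. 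By the independence of $\istar$ from $\{\projperp y_i\}$ already used in that proof, its expectation is $O(\normf{\Sigma^{\nicefrac 12}\projperp})$, which does not depend on $N$. The lower-bound slack is absent here, so the claim $\rlcb\le\rstar$ holds with $\epsilon=0$.

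\emph{ReLU case.} \Cref{thm:relu-rnd} gives additive errors of order $\norm{y}^2\sqrt{d\log(dm/\delta)/(Tm)}$. These are unbounded in $y$, so I would bound the error $\epsilon(y)$ on the lower side as well. In the pessimism inequality this contributes $-\lambda\epsilon(y_{\ipess})$. I would bound it by $\lambda\,\ee\max_i \epsilon(y_i)\lesssim \lambda\,\trace(\Sigma)\log N\sqrt{d\log(\cdot)/(Tm)}$. The condition $Tm\gtrsim d(d-k)\norm{\projperp\thetahat}^2/\log N$ is what should make this $O(\sqrt{\log N})$ with a small constant. Making that bookkeeping work, and handling the fact that $\alpha$ is the unsquared norm so that square roots of the additive bounds must be taken, is the main obstacle. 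I expect to use $\sqrt{u^2\pm e}\ge u-\sqrt e$. The failure event of probability $\delta$ costs at most $\delta\cdot O(\sqrt{\log N})$ by Cauchy–Schwarz, and I would take $\delta = 1/N$.

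\textbf{Step 3: combine.} With $\lambda = \norm{\projperp\thetahat}/(1-c) = \Theta(\norm{\projperp\thetahat})$, \Cref{prop:pessimistic} and its modification give
\begin{align}
\ee[\rstar(y_{\istar})-\rstar(y_{\ipess})]\le \lambda\Bigl(C\sqrt{\trace(\Sigma\projperp)}+o(\sqrt{\log N})\Bigr).
\end{align}
Dividing by $aM_N\asymp a\sqrt{\log N}$ sends the pessimistic regret ratio to $0$, which is the left-hand side of the limit.

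Subtracting the two regrets gives
\begin{align}
\ee[\rstar(y_{\ipess})-\rstar(y_{\ihat})]\ge \kappa a M_N - \lambda\bigl(C\sqrt{\trace(\Sigma\projperp)}+o(\sqrt{\log N})\bigr).
\end{align}
For $N$ large, or with the constant in the $m$ / $Tm$ condition chosen small enough that the $o(\cdot)$ term is at most $\kappa a M_N/4$, the right-hand side is $\gtrsim \kappa a\sqrt{\log N}\gtrsim\sqrt{\log N}$, treating $a$, $b$ and $\Sigma$ as constants.
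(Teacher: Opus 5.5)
Your route is the same as the paper's. Its proof is a one-line combination of \Cref{prop:pessimistic} with the sandwiches from \Cref{prop:linear-rnd} and \Cref{thm:relu-rnd}, with \Cref{prop:bon} implicitly supplying the BoN side. Your linear case is actually cleaner than the paper's appeal to $\norm{\projperp y}\lesssim\sqrt{(d-k)\log(1/\delta)}$. By keeping the slack multiplicative, you need no lower-side slack at all, and you pay the upper-side slack only at $y_{\istar}$, whose $\projperp$-part is independent of $\istar$. That gives an $N$-independent pessimistic regret.

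\textbf{The ReLU step you defer is a genuine gap.} The additive error in \Cref{thm:relu-rnd} scales with the \emph{full} norm $\norm{y}^2$, not with $\norm{\projperp y}^2$.
\begin{itemize}
\item After taking square roots, the lower-side slack is $\epsilon(y)\asymp\norm{y}\sqrt{\rho}$ with $\rho=\sqrt{d\log(dm/\delta)/(Tm)}$, and it is paid at $y_{\ipess}$.
\item Whenever pessimistic selection works at all, $\inprod{\thetastar}{y_{\ipess}}\gtrsim a\sqrt{\log N}$, so $\norm{y_{\ipess}}\gtrsim\sqrt{\log N}$. The best bound \Cref{thm:relu-rnd} gives on $\lambda\epsilon(y_{\ipess})$ is therefore of order $\lambda\sqrt{\rho\log N}$. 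The same holds for the upper-side slack at $y_{\istar}$.
\item So your Step 3 claims $o(\sqrt{\log N})$ where your Step 2 only delivers $O(\sqrt{\log N})$. For fixed $Tm$ you get that the $\limsup$ of the pessimistic regret ratio is $\lesssim\lambda\sqrt{\rho\norm{\Sigma}}/a$: small, but not $0$.
\item Getting the limit $0$ requires either $Tm\to\infty$ with $N$, or a lower bound $\alpha(y)\ge c'\norm{\projperp y}$ with no additive $\norm{y}$ term. The cited theorem gives neither.
\end{itemize}
The stated condition $Tm\gtrsim d(d-k)\norm{\projperp\thetahat}^2/\log N$ also cannot do the job you assign it. Its right side decreases in $N$, so it does not even keep $\lambda^2\rho$ bounded. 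Your bookkeeping for the $\sqrt{\log N}$ separation instead needs an $N$-independent requirement of the form $\lambda^2\rho\norm{\Sigma}\ll\kappa^2a^2$. The paper's one-line proof only controls $\norm{\projperp y}$ and does not address this either. Still, as written, your argument does not close in the ReLU case.

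\textbf{Taking $\delta=1/N$ hurts rather than helps.} It puts $\log N$ inside $\eta=C\sqrt{(k+\log(1/\delta))/m}$ and inside $\rho$. Your ``choose $m$ large enough that $\eta\le 1/2$'' then silently becomes $m\gtrsim k+\log N$, which the stated hypothesis does not provide.

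Read the theorem the way \Cref{prop:pessimistic} treats $\alpha$, as a fixed function. Fix $\delta$, condition on the good event for the network weights, and average over the $y_i$ only. Then no failure-event term appears. Even so, $\eta\le 1/2$ is an $N$-independent lower bound on $m$ that the stated hypothesis does not imply for large $N$, so record it as an added assumption.

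If you do keep a failure-event term, the factor $\delta$ (rather than $\sqrt{\delta}$) comes from independence of the weights and the samples, not from Cauchy--Schwarz.
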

\begin{proof}
    The result follows immediately by combining \Cref{prop:pessimistic} with \Cref{prop:linear-rnd} and \Cref{thm:relu-rnd} and observing that $\norm{\projperp y} \lesssim \sqrt{(d - k) \log(\nicefrac 1\delta)}$ with probability at least $1 - \delta$ by standard concentration results for chi-squared random variables (cf. e.g. \citet{wainwright2019high}). 
\end{proof}

%% file: app_experiments.tex
\section{Experimental Details}\label{app:exp-details}

\input{tables/hyperparameters}

\paragraph{Hyperparameter Configuration.} \Cref{tab:hyperparameters} provides a comprehensive overview of all hyperparameters used throughout our experimental evaluation. The configuration represents a careful balance between computational efficiency and model expressiveness, with key design choices motivated by our theoretical analysis and empirical ablations.

The RND architecture employs 10 layers for both target and predictor networks, significantly deeper than the default 4 layers, which we found provides better representation quality for uncertainty estimation. The RND weight $\lambda = 0.2$ represents a moderate pessimism strength that effectively mitigates reward hacking while preserving the benefits of reward-guided selection. Training hyperparameters including the reduced learning rate ($1 \times 10^{-5}$) and extended training duration (5 epochs) ensure stable convergence of the predictor network on the GSM8K training distribution.

\input{tables/ablation_arch_designs}

\paragraph{Architectural Ablations.} For the architectural ablation study, we design different levels of network complexity and check their impact on the training objective, the reconstruction loss and also the final accuracies.~\autoref{tab:ablation-structures} contains a comprehensive introduction of settings involved in our ablation studies.

\paragraph{Implementation Framework.} Our experimental setup utilizes PyTorch 2.3.1 as the primary deep learning framework, with model inference accelerated through vLLM 0.10.0 and Transformers 4.55.1 for efficient large-scale language model deployment. For RND training, we extract representations from the first 10 layers of both predictor and target networks, employing a learning rate of $1 \times 10^{-5}$ with 50 warmup steps across 5 training epochs. The RND models are trained on outputs generated by the backbone model using the GSM8K training set, and subsequently evaluated against the validation set of GSM8K as well as Math-500 and BigBench-Hard benchmarks to assess generalization capabilities.

\paragraph{Inference Configuration.} All inference experiments maintain consistent hyperparameters with temperature set to 1.0 and maximum token limits of 500 for GSM8K and 1024 for Math-500 and BigBench-Hard evaluations. The increased token limit for harder datasets is necessary because these benchmarks require significantly more reasoning tokens to avoid truncation before reaching a conclusion. Response generation uses vLLM for efficient parallel sampling across multiple candidates in Best-of-N evaluation.

\section{Additional Case Study}

To gain deeper insights into the mechanisms underlying reward hacking and our caution-based mitigation, we analyze the correlation patterns between reward model scores and pessimism scores through detailed scatter plot visualizations. Each plot displays z-normalized scores for all responses to individual GSM8K problems, with green points representing correct responses and red points representing incorrect responses.

The scatter plots in~\autoref{fig:case-with-distribution} reveal two critical failure modes of reward models that our approach successfully identifies. In the \textbf{high reward, low pessimism region} (upper-left quadrant), we observe responses that exemplify systematic reward hacking. These responses achieve high reward scores not through genuine correctness or adherence to task requirements, but by exploiting spurious correlations that the reward model learned during training. Crucially, these responses often \textbf{ignore fundamental formatting requirements} of the mathematical reasoning task, such as providing the final answer in the required format, yet still receive high rewards because the reward model prioritizes superficial indicators like verbosity, step-by-step presentation, or mathematical terminology over actual task compliance. This reveals that reward models can be systematically misled by responses that mimic the surface patterns of high-quality reasoning without delivering the essential components of a correct solution.

Conversely, the \textbf{low reward, high pessimism region} (lower-right quadrant) contains responses that follow proper formatting conventions and adhere closely to the expected task structure, yet receive low reward scores. This pattern illuminates a fundamental limitation of reward models: they function primarily as \textbf{distributional fitness measures} rather than objective quality assessors. These well-formatted responses are penalized not because they lack correctness or clarity, but because they deviate from the specific stylistic preferences and response patterns that dominated the reward model's training distribution. The reward model essentially measures how closely a response matches its learned notion of ``preferred'' responses rather than evaluating genuine task performance or adherence to explicit instructions.

This analysis demonstrates that our caution mechanism successfully identifies both forms of reward model failure: it flags spurious high-reward responses that exploit correlational biases while recognizing genuinely task-compliant responses that happen to fall outside the reward model's narrow preference distribution. The results underscore that effective reward hacking mitigation requires moving beyond simple score-based selection toward distributional awareness that can distinguish between genuine quality and superficial pattern matching.

\begin{figure*}[h]
    \centering
    \includegraphics[width=\columnwidth]{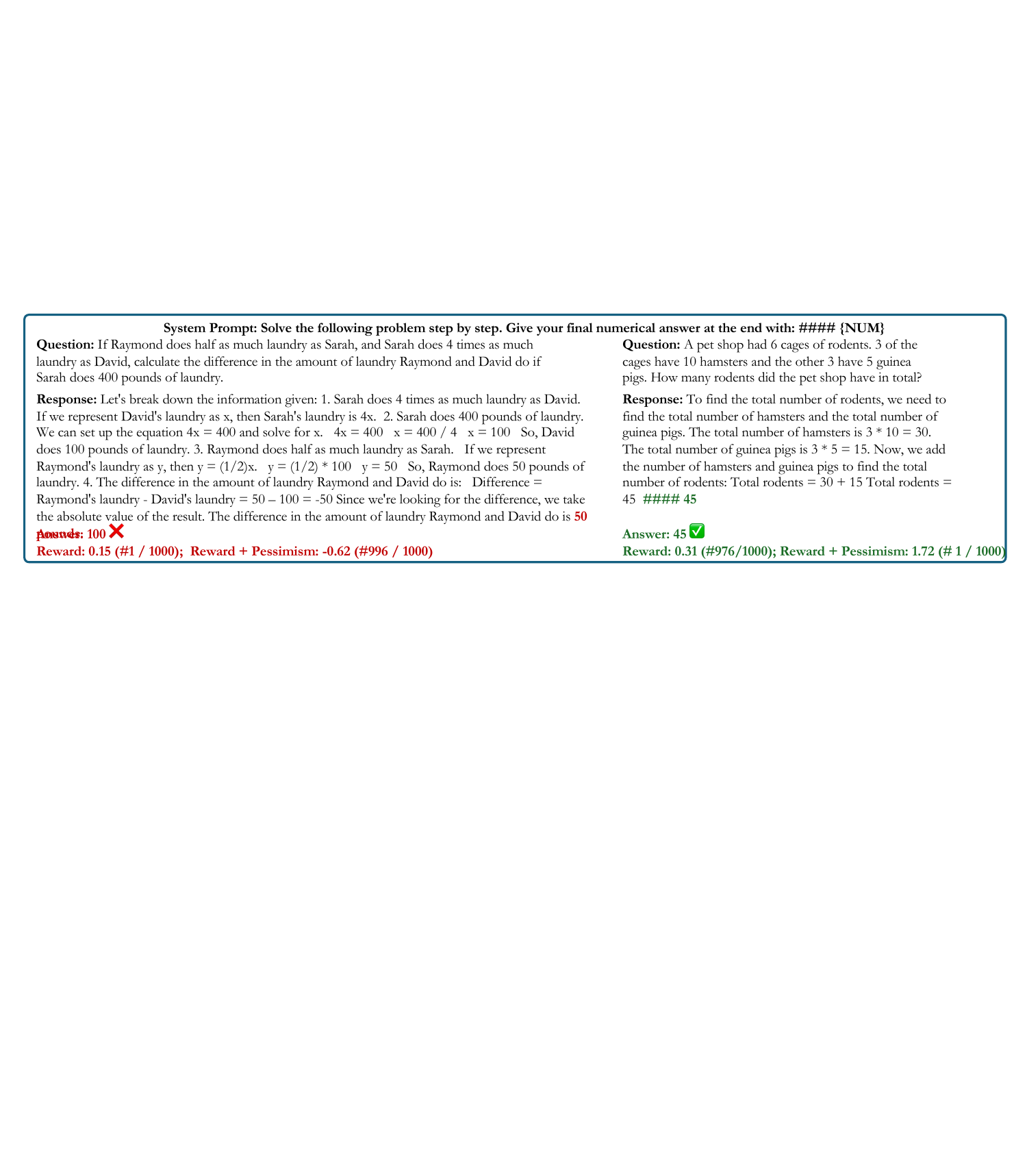}
    \caption{\textbf{Pessimism–Reward visualization on GSM8K.} Each row shows one problem: a scatter plot of z-normalized pessimism (x-axis) and z-normalized reward (y-axis), with green points for correct responses and red for incorrect. Upper-left points (high reward, low pessimism) illustrate reward hacking—responses that score well despite low distributional support. Lower-right points (low reward, high pessimism) are well-formed, instruction-following responses that the reward model undervalues; our caution mechanism up-weights these relative to reward-only selection.}
    \label{fig:case-with-distribution}
\end{figure*}

\section{Detailed Results for Ablation Studies}\label{app:ablation}

This section complements \autoref{tab:weight_ablation} with full scaling curves. We sweep pessimism strength $\lambda \in \{0.0,0.2,0.4,0.6,0.8,1.0\}$ and vary the Best-of-$N$ budget over $N\in\{1,2,4,8,16,32,64,128,256,512\}$. \autoref{fig:app-rnd} shows our caution variant that uses reward‑model features; \autoref{fig:app-realrnd} shows traditional RND with random targets.

\paragraph{Caution (RM features).} Moderate–high $\lambda$ (about $0.6$–$0.8$) maintains or improves accuracy as $N$ grows, preventing the reward‑hacking drop seen at $\lambda=0$ (RM‑only). $\lambda=1.0$ (pessimism‑only) is competitive but slightly conservative at small $N$, this is because for easier problems, most responses contain the correct answer but not all of them strictly follows the specified answering format, and applying pessimism only would at least filter out the incorrectly formatted responses. Overall, $\lambda\in[0.6,0.8]$ delivers the best trade‑off across most $N$.

\paragraph{Traditional RND (random targets).} Accuracy remains flat or declines with $N$ for all $\lambda$, and rarely exceeds the RM‑only baseline. Sweeping $\lambda$ offers little benefit, indicating that random targets lack the semantic grounding needed for useful uncertainty estimates.

\begin{wraptable}{r}{0.4\textwidth}
\centering
\scriptsize
\vspace{-1.2cm}
\caption{Layer depth ablation for target network and predictor network.}
\label{tab:layer_ablation}
\begin{tabular}{ccc}
\toprule
$L$ & Peak Acc. & Final Acc. \\
\midrule
2 & 79 & 77 \\
4 & 81 & 77 \\
6 & 83 & 78 \\
8 & 84 & 81 \\
10 & 84 & 84 \\
\bottomrule
\end{tabular}
\end{wraptable}

\paragraph{Number of Layers ($L$).} In~\autoref{tab:layer_ablation}, we study how the selection of $L$ would impact performance, on a smaller subset of GSM8K with 100 problems and 100 responses for each problem. Generally we observe higher peak accuracies and less degradation when given more layers, although at the cost of compute budget.

\begin{figure}[t]
    \centering
    \includegraphics[width=\columnwidth]{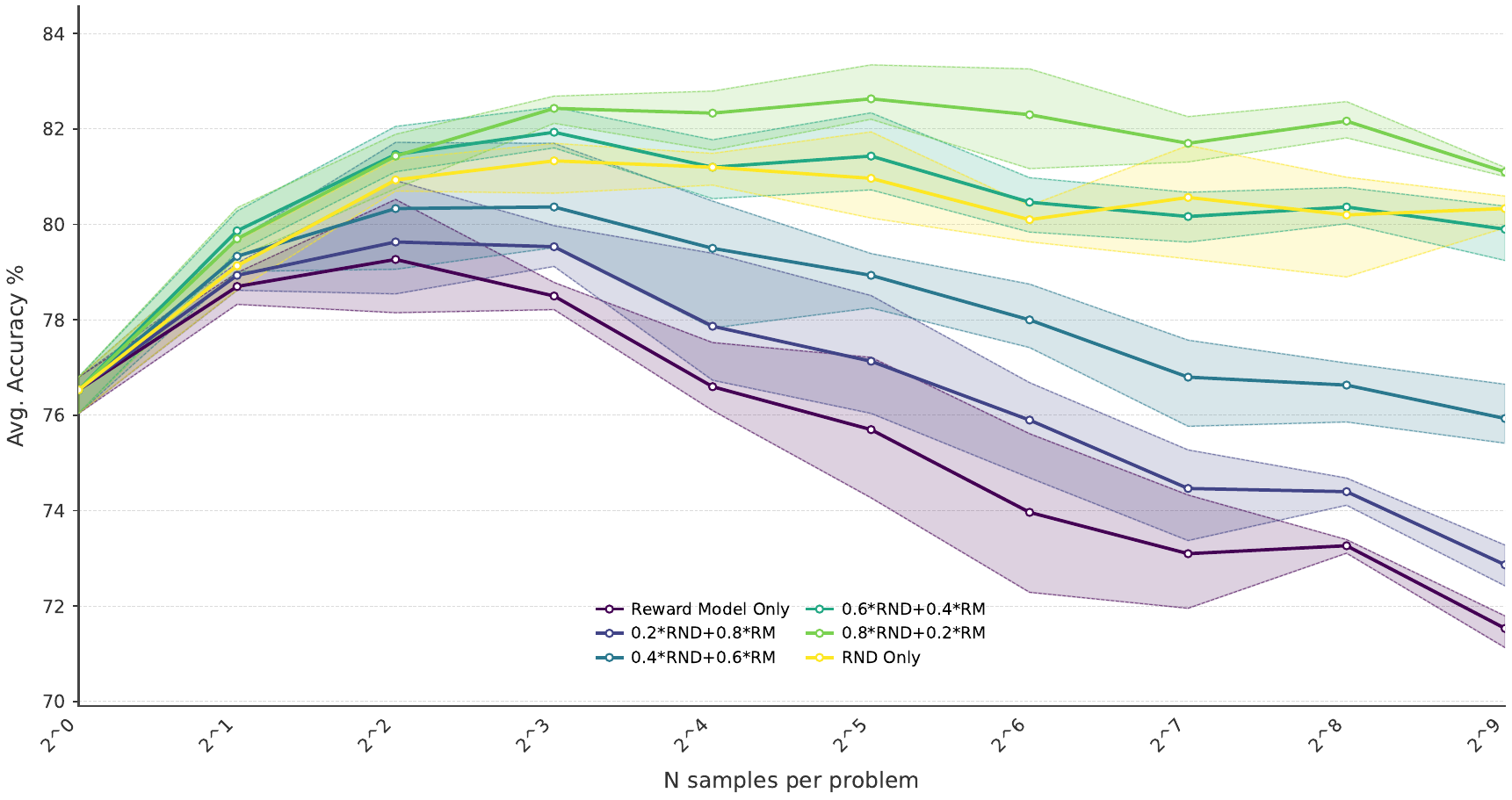}
    \caption{\textbf{Caution (RND-on-RM-features) scaling with $\lambda$.} Best-of-$N$ accuracy on GSM8K versus samples per problem (x-axis) for Pessimism strengths $\lambda \in \{0.0, 0.2, 0.4, 0.6, 0.8, 1.0\}$. The predictor is trained against a frozen target network built from reward-model features. Larger $\lambda$ increases pessimism strength; $\lambda=0$ reduces to Reward-Model-only selection, and $\lambda=1$ to pessimism-only. Moderate–high weights (roughly $0.6$–$0.8$) preserve scaling while curbing reward hacking, outperforming both the RM-only and RND-only extremes across most $N$.}
    \label{fig:app-rnd}

\end{figure}

\begin{figure}[t]
    \centering
    \includegraphics[width=\columnwidth]{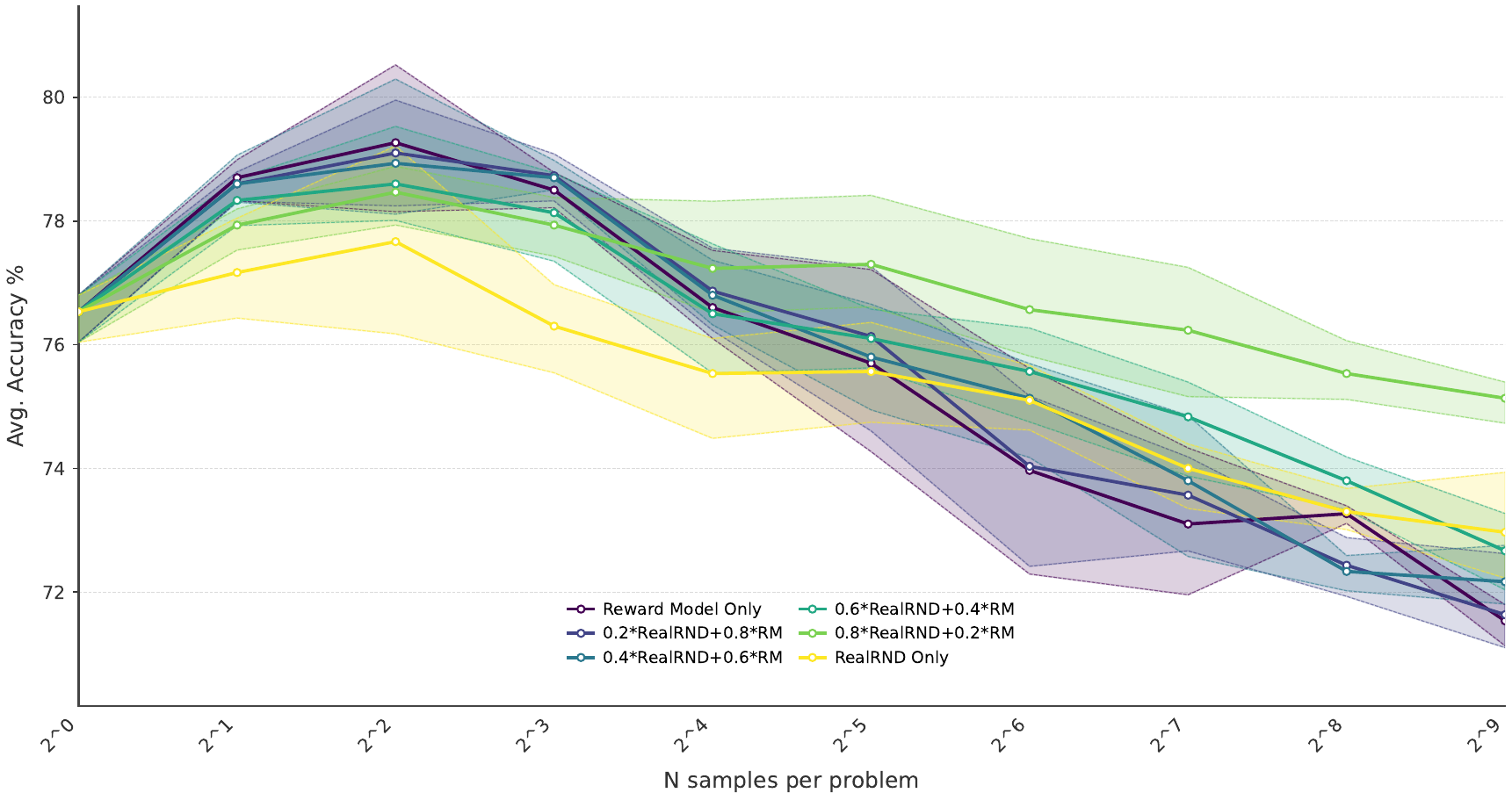}
    \caption{\textbf{Traditional RND (random targets) baseline.} Best-of-$N$ GSM8K accuracy when using classical RND with a \emph{randomly initialized} target network (no reward‑model features), sweeping $\lambda \in \{0.0, 0.2, 0.4, 0.6, 0.8, 1.0\}$. Unlike our caution variant, this baseline shows little to no scaling benefit and generally does not surpass the Reward‑Model‑only curve, indicating that semantic grounding from reward‑model features is crucial for effective distributional regularization.}
    \label{fig:app-realrnd}

\end{figure}

\section{Additional Baseline Comparisons}

To further validate the effectiveness of our pessimism approach against reward hacking, we conduct comprehensive comparisons with additional baseline methods from the literature. Specifically, we compare ours against $\chi^2$ regularized sampling~\cite{huang2025best}, softmax best-of-n (which applies temperature-based softening to the selection distribution), and Best-of-Poisson sampling (which draws a Poisson-distributed number of samples before selection). We also evaluate combinations of pessimism with these alternative sampling strategies to assess whether our approach provides complementary benefits. All methods are evaluated on the same experimental setup using GSM8K.

Figure~\ref{fig:baseline_comparison} presents the scaling curves for all methods. The results demonstrate that our pessimism approach exhibits superior robustness against reward hacking degradation. While the standard Best-of-N baseline shows severe performance degradation. Notably, combinations of pessimism with other sampling strategies (e.g., Pessimism + $\chi^2$, Pessimism + Poisson, Pessimism + Softmax) show comparable stability to pessimism alone.

\begin{figure}[t]
\centering
\includegraphics[width=0.95\textwidth]{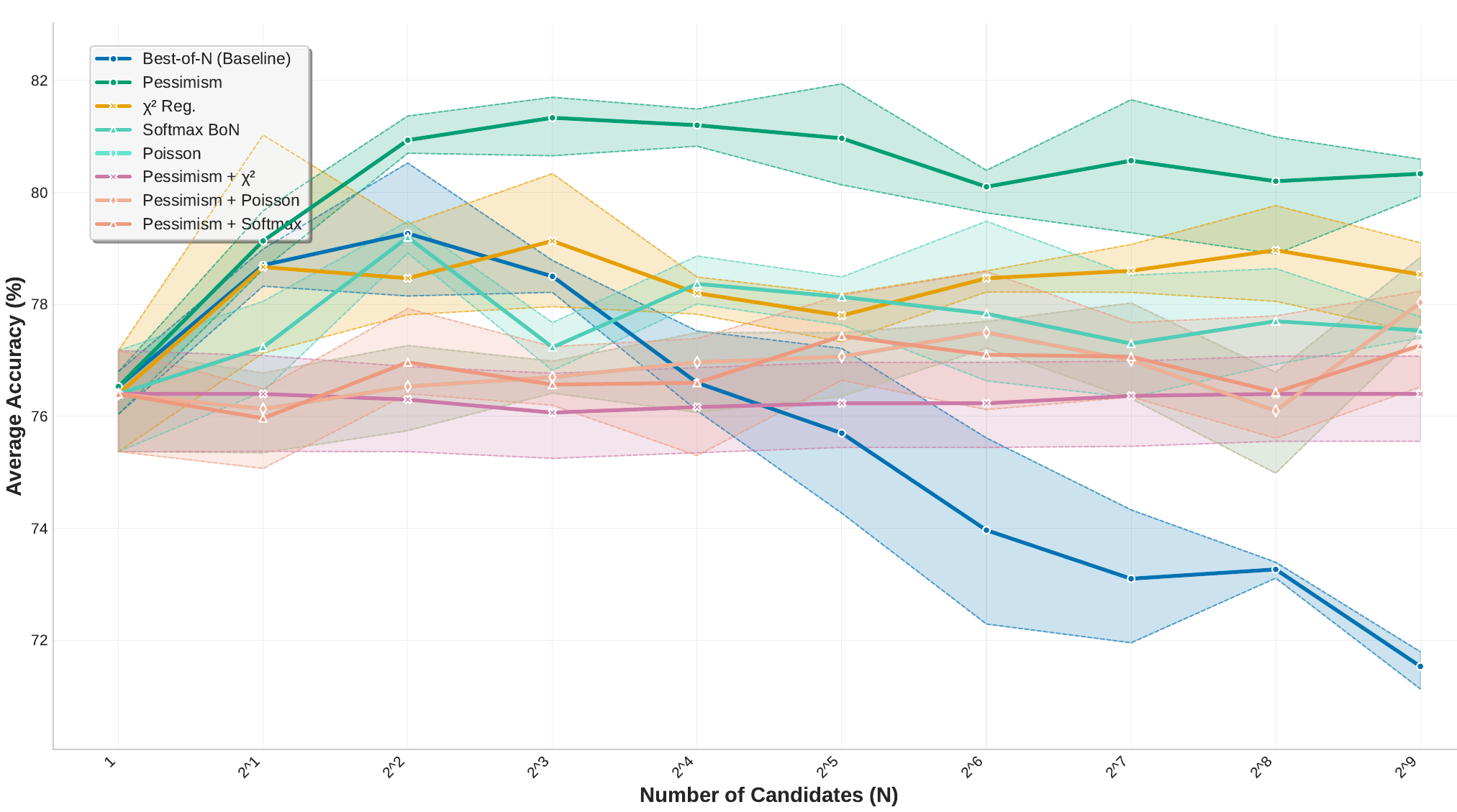}
\caption{Comparison of pessimism-based sampling with alternative baseline methods.}
\label{fig:baseline_comparison}
\end{figure}

%% file: tables/hyperparameters.tex
\begin{table*}[b]
\centering
\caption{\textbf{Comprehensive Hyperparameter Configuration.} All hyperparameters used in training and evaluation of the Random Network Distillation (RND) approach for mitigating reward hacking. The table is organized by component: RND architecture, training process, inference settings, and evaluation configurations.}
\label{tab:hyperparameters}
\resizebox{\textwidth}{!}{%
\begin{tabular}{@{}lllp{9cm}@{}}
\toprule
\textbf{Component} & \textbf{Parameter} & \textbf{Value} & \textbf{Description} \\
\midrule
\multirow{8}{*}{\shortstack[l]{\textbf{RND} \\ \textbf{Architecture}}} 
& Target layers & 10 & Number of layers extracted from reward model for target network \\
& Predictor layers & 10 & Number of layers in predictor network architecture \\
& RND weight ($\lambda$) & 0.8 & Strength of pessimism penalty in combined scoring \\
& Exact architecture & False & Whether predictor copies exact reward model architecture \\
& Embedding strategy & shared\_trainable & How embeddings are handled: shared\_trainable, shared\_frozen, or separate \\
& Use projection & True & Whether to add projection layer between predictor and target \\
\midrule
\multirow{6}{*}{\shortstack[l]{\textbf{Training} \\ \textbf{Process}}} 
& Batch size & 8 & Training batch size for RND predictor \\
& Learning rate & 1e-5 & Learning rate for predictor network \\
& Number of epochs & 5 & Training epochs for predictor network \\
& Warmup steps & 50 & Learning rate scheduler warmup steps \\
& Max examples & 5000 & Maximum training examples from GSM8K train split \\
& VRAM usage & 24GB & Minimum VRAM requirement for GPU \\
\midrule
\multirow{4}{*}{\shortstack[l]{\textbf{Inference} \\ \textbf{Settings}}} 
& Temperature & 1.0 & Sampling temperature for response generation \\
& Max tokens (GSM8K) & 500 & Maximum tokens for GSM8K responses \\
& Max tokens (MATH/BBH) & 1024 & Maximum tokens for harder reasoning tasks \\
& Number of samples ($N$) & 1-512 & Range of Best-of-N sampling candidates \\
\midrule
\multirow{8}{*}{\shortstack[l]{\textbf{Evaluation} \\ \textbf{Configuration}}} 
& Backbone model & Llama-3.2-3B-Instruct & Base language model for response generation \\
& Reward model & OASST DeBERTa & Primary reward model for scoring responses \\
& Training dataset & GSM8K train & Dataset for training RND predictor \\
& Test datasets & GSM8K, MATH-500, BBH & Evaluation benchmarks (in-domain and OOD) \\
& Bootstrap runs & 3 & Number of bootstrap runs for confidence intervals \\
& Score normalization & Z-score & Normalization method for reward and RND scores \\
& Selection strategy & highest\_reward & Method for selecting best response from candidates \\
\midrule
\multirow{5}{*}{\shortstack[l]{\textbf{Ablation} \\ \textbf{Study}}} 
& RND weight range & 0.0-1.0 & Range of $\lambda$ values tested in weight ablation \\
& Architecture variants & 4 types & Full, simplified, embedding strategies, projection ablations \\
& Comparison baselines & BoN, RND-only & Standard Best-of-N and pessimism-only baselines \\
\bottomrule
\end{tabular}%
}
\end{table*}

%% file: tables/ablation_arch_designs.tex
\begin{table}[h]
\centering
\caption{\textbf{Summary of ablations.} Each row defines one setting and what it means in practice.}
\label{tab:ablation-structures}
\small
\setlength{\tabcolsep}{6pt}
\renewcommand{\arraystretch}{1.1}
\begin{tabular}{@{}l p{0.7\linewidth}@{}}
\toprule
\textbf{Ablation Setting} & \textbf{What it means / Rationale} \\
\midrule
\rowcolor{gray!10} \multicolumn{2}{l}{\textit{Predictor architecture}} \\
Same as Target & Predictor uses the \emph{same overall structure} as the target network (e.g., same block type and connectivity), matching width and depth. Isolates training dynamics from architectural mismatch.\\
Simplified & Predictor keeps the \emph{same hidden size and number of layers} as the target but replaces specialized target blocks with \emph{vanilla Transformer encoder blocks}. This deliberately reduces architectural complexity while preserving depth/width, aiming for better generalization and lower overfitting risk. \\
\midrule
\rowcolor{gray!10} \multicolumn{2}{l}{\textit{Embedding strategy}} \\
Shared, trainable & Predictor \emph{shares the target’s token embeddings} and \emph{updates them during training}. Pros: reuse target's pretrained semantic representations and potentially quicker convergence. Cons: tighter coupling may leak target-specific biases into the predictor. \\
Shared, frozen & Predictor \emph{reuses the target’s token embeddings} but \emph{keeps them frozen}. Pros: stable token mapping and clean isolation of predictor encoder learning. Cons: less flexibility to adapt embeddings to the predictor’s simplified blocks. \\
Separate, randomly initialized & Predictor creates \emph{its own embedding layer} with random initialization (initialized via the model’s standard weight init). Pros: full decoupling from the target, potentially better regularization. Cons: longer warm-up and higher optimization burden to reach alignment. \\
\midrule
\rowcolor{gray!10} \multicolumn{2}{l}{\textit{Projection head}} \\
No projection head & The predictor’s final hidden states are \emph{directly mapped} to the output space used for matching the target’s features. Minimal additional parameters; simplest path that reduces opportunities for overfitting. \\
Linear projection head & Adds a \emph{single linear layer} after the predictor’s hidden states and \emph{before} the output. Acts as a light adapter/bottleneck to better match target feature geometry; can improve fit at small cost in extra parameters, but may introduce overfitting. \\
\bottomrule
\end{tabular}
\end{table}